\newtheorem{lemma}{Lemma}
\newtheorem{theorem}{Theorem}
\newtheorem{definition}{Definition}
\newtheorem{proposition}{Proposition}
\newtheorem{corollary}{Corollary}
\DeclareMathOperator*{\argmin}{arg\,min}
\DeclareMathOperator*{\argmax}{arg\,max}
\newcommand{\ga}{\gamma}
\newcommand{\be}{\beta}
\newcommand{\mut}{\tilde{\mu}}
\newcommand{\muhat}{\hat{\mu}}
\newcommand{\omu}{\overline{\mu}}
\newcommand{\umu}{\underline{\mu}}
\newcommand{\calE}{\mathcal{E}}
\newcommand{\minimize}{\mathrm{minimize\;}}
\newcommand{\maximize}{\mathrm{maximize\;}}  
\newcommand{\subjectto}{\mbox{subject\;to\;}}
\icmltitlerunning{Good Arm Identification via Bandit Feedback}
\begin{document}

\twocolumn[
\icmltitle{Good Arm Identification via Bandit Feedback}



\icmlsetsymbol{equal}{*}

\begin{icmlauthorlist}
\icmlauthor{Hideaki Kano}{ut,riken}
\icmlauthor{Junya Honda}{ut,riken}
\icmlauthor{Kentaro Sakamaki}{ut}

\icmlauthor{Kentaro Matsuura}{jnj}
\icmlauthor{Atsuyoshi Nakamura}{hu}
\icmlauthor{Masashi Sugiyama}{riken,ut}
\end{icmlauthorlist}

\icmlaffiliation{ut}{University of Tokyo}
\icmlaffiliation{riken}{RIKEN}
\icmlaffiliation{jnj}{Johnson \& Johnson}
\icmlaffiliation{hu}{Hokkaido University}

\icmlcorrespondingauthor{Hideaki Kano}{kano@ms.k.u-tokyo.ac.jp}
\icmlcorrespondingauthor{Junya Honda}{jhonda@k.u-tokyo.ac.jp}

\icmlkeywords{Machine Learning, ICML}

\vskip 0.3in
]



\printAffiliationsAndNotice{\icmlEqualContribution} 

\begin{abstract}
    We consider a novel stochastic multi-armed bandit problem called {\em good arm identification} (GAI),
    where a good arm is defined as an arm with expected reward greater than or equal to a given threshold. 
    GAI is a pure-exploration problem
    that a single agent repeats a process of outputting an arm as soon as it is identified as a good one before confirming the other arms are actually not good.
    The objective of GAI is to minimize the number of samples for each process.
    We find that GAI faces a new kind of dilemma, the {\em exploration-exploitation dilemma of confidence},
    which is different difficulty from the best arm identification.
    As a result,
    an efficient design of algorithms for GAI is quite different from that for the best arm identification.
    We derive a lower bound on the sample complexity of GAI that is tight up to the logarithmic factor $\lo(\log \frac{1}{\delta})$ for acceptance error rate $\delta$.
    We also develop an algorithm whose sample complexity almost matches the lower bound.
    We also confirm experimentally that our proposed algorithm outperforms naive algorithms 
    in synthetic settings based on a conventional bandit problem 
    and clinical trial researches for rheumatoid arthritis.
\end{abstract}

\section{Introduction}
  The stochastic multi-armed bandit (MAB) problem is one of the most fundamental problems 
  for sequential decision-making under uncertainty \citep{Sutton1998}.
  It is regarded as a subfield of reinforcement learning 
  in which an agent aims to acquire a policy to select the best-rewarding action via trial and error.
  In the stochastic MAB problem,
  a single agent repeatedly plays $K$ slot machines called \emph{arms}, 
  where an arm generates a stochastic reward when pulled.
  At each round $t$, the agent pulls arm $i \in [K] = \{1,2,\dots,K\}$
  and then observes an i.i.d.~reward $X_i(t)$ from distribution $\nu_i$ with expectation $\mu_i \in [0,1]$.
 
  One of the most classic MAB formulations is the \emph{cumulative regret minimization} \citep{LAI19854, Auer2002},
  where the agent tries to maximize the cumulative reward
  over the fixed number of trials.
  In this setting, the agent faces the \emph{exploration-exploitation dilemma of reward},
  where the exploration means that the agent pulls an arm other than the currently best arm to find better arms,
  and the exploitation indicates that the agent pulls the currently best arm to increase the cumulative reward.
  The related frameworks can be widely applied to various real-world problems 
  such as clinical trials \citep{ASTIN,seckinumab,choy,article,Liu2017} and personalized recommendations \citep{Tang:2015:PRV:2766462.2767707}.

  Another classic branch of the MAB problem is the \emph{best arm identification} \citep{JMLR:v17:kaufman16a, ICML12-shivaram},
  which is a pure-exploration problem that 
  the agent tries to identify the best arm $a^* = \argmax_{i \in \{1,2,\ldots,K\}} \mu_i$.  
  So far, the conceptual idea of the best arm identification has also been successfully applied to many kinds of real-world problems
  \citep{doi:10.1080/03610918508812467, Schmidt2006, pmlr-v32-zhoub14, pmlr-v51-jun16}.
  Recently, the \emph{thresholding bandit problem} was proposed \citep{Locatelli:2016:OAT:3045390.3045569} as a variant of pure-exploration MAB formulations.
  In the thresholding bandit problem, the agent tries to correctly partition all the $K$ arms into good arms and bad arms,
  where a good arm is defined as an arm whose expected reward is greater than or equal to a given threshold $\xi >0$,
  and a bad arm is defined as an arm whose expected reward is lower than the threshold $\xi$.
  However, in practice, neither correctly partitioning all the $K$ arms nor exactly identifying the very best arm
  is always needed;
  rather, finding some of reasonably good arms as fast as possible is often more useful.

  Take a problem of personalized recommendations for example.
  The objective is to increase our profit by sending direct emails recommending personalized items.
  In this problem, timely recommendation is a key,
  because the best sellers in the past are not necessarily the best sellers in the future.
  Now, there arise three troubles if this problem is formulated as the best arm identification or the thresholding bandit problem.
  First, an inflation of exploration costs could break out
  when the purchase probabilities of the multiple best sellers are much close with each other.
  Although this trouble can be partly relaxed by the $\epsilon$-best arm identification \citep{Even-Dar:2006:AES:1248547.1248586},
  in which an arm with expectation greater than or equal to $\max_{i\in [K]} \mu_i-\epsilon$ is also acceptable, 
  the tolerance parameter $\epsilon$ has to be set very conservatively.
  Second, recommending even the best sellers is not a good idea
  if the ``best'' purchase probability is too small considering the advertising costs.
  Third, it needlessly increases exploration costs 
  to partition all items into good (or profitable) items
  and bad (or not profitable) items,
  if it is enough to find only some good items to increase our profit.
  For the above reasons, the formulation of the personalized recommendation problem 
  as the best arm identification or the thresholding bandit problem is not necessarily effective.
  
  Similar troubles also occur in clinical trials for finding drugs \citep{Kim2011} or for finding appropriate doses of a drug \citep{ASTIN,seckinumab,choy,article,Liu2017}, where
  the number of patients is extremely limited.
  In such a case, it is vitally important to find some drugs or doses with satisfactory effect as fast as possible 
  rather than either to classify all drugs or doses into satisfactory ones and others or to identify the exactly best ones.
  
  In this paper, we propose a new bandit framework named \emph{good arm identification} (GAI),
  where a good arm is defined as an arm whose expected reward is greater than or equal to a given threshold.
  We formulate GAI as a pure-exploration problem in the {\em fixed confidence} setting, 
  which is often considered in conventional pure-exploration problems.
  In the fixed confidence setting, an acceptance error rate $\delta$ is fixed in advance,
  and we minimize the number of pulling arms needed to assure the correctness of the output with probability greater than or equal to $1-\delta$.
  In GAI, a single agent repeats a process of outputting an arm as soon as 
  the agent identifies it as a good one
  with error probability at most $\delta$.
  If it is found that there remain no good arms, then the agent stops working.
  Although the agent does not face the exploration-exploitation dilemma of reward since GAI is a pure-exploration problem,
  the agent suffers from a new kind of dilemma, that is the {\em exploration-exploitation dilemma of confidence},
  where the exploration means that the agent pulls other arms than the currently best one that may be easier to confirm to be good,
  and the exploitation indicates that the agent pulls the currently best arm to increase the confidence on the goodness.

  To address the dilemma of confidence, we propose a Hybrid algorithm for the Dilemma of Confidence (HDoC).
  The sampling strategy of HDoC is based on the upper confidence bound (UCB) algorithm for the cumulative regret minimization \citep{Auer2002},
  and the identification rule (that is, the criterion to output an arm as a good one) of HDoC is based on
  the lower confidence bound (LCB) for the best arm identification \citep{ICML12-shivaram}.  
  In addition, we show that a lower bound on the sample complexity for GAI is $\Omega(\lambda\log \frac{1}{\delta})$,
  and HDoC can find $\lambda$ good arms within $\lo \left( \lambda \log \frac{1}{\delta} + \allowbreak (K-\lambda) \log \log \frac{1}{\delta} \right)$ samples.
  This result suggests that HDoC is superior to naive algorithms based on conventional pure-exploration problems,
  because they require $\lo \left(K \log \frac{1}{\delta} \right)$ samples.

  For the personalized recommendation problem, the GAI approach is more appropriate,
  because the agent can quickly identify good items since 
  the agent only focuses on finding good items rather than identifying the best item (as in the best arm identification) and bad items (as in the thresholding bandit).
  Certainly, there exists a possibility that the recommended item does not possess the best purchase probabilities.
  However, that does not necessarily matter
  when customers' interests and item repositories undergo frequent changes,
  because identifying the exactly best item requires too many samples,
  and thus we cannot do that in practice.
  In addition, thanks to the absolute comparison, not the relative comparison,
  the inflation of exploration costs does not break out even if the purchase probabilities are close to each other,
  and then the agent can refrain from recommending items when the purchase probabilities are too small.
  
  Our contributions can be summarized as four folds.
  First, we formulate a novel pure-exploration problem called GAI and find there is a new kind of dilemma, 
  that is, the exploration-exploitation dilemma of confidence.
  Second, we derive a lower bound for GAI in the {\em fixed confidence} setting.
  Third, we propose the HDoC algorithm and show that an upper bound on the sample complexity of HDoC almost matches the lower bound. 
  Fourth, we experimentally demonstrate that HDoC outperforms two naive algorithms derived from other pure exploration problems
  in synthetic settings based on the thresholding bandit problem \citep{Locatelli:2016:OAT:3045390.3045569}
  and the clinical trial researches for rheumatoid arthritis \citep{seckinumab,choy,article,Liu2017}.

\section{Good Arm Identification}
    In this section, we first formulate GAI
    as a pure exploration problem in the fixed confidence setting.
    Next, we derive a lower bound on the  sample complexity for GAI.
    We give the notation list in Table~\ref{Notation1}.
  
    \subsection{Problem Formulation}
      Let $K$ be the number of arms, $\xi \in (0,1)$ be a threshold and $\delta >0$ be an acceptance error rate.
      Each arm $i \in [K] = \{1,2,\ldots,K\}$ is associated with Bernoulli distribution $\nu_i$ with mean $\mu_i$.
      The parameters $\{\mu_i\}_{i=1}^{K}$ are unknown to the agent.
      We define a good arm as an arm whose expected reward is greater than or equal to threshold $\xi$.
      The number of good arms is denoted by $m$ which is unknown to the agent and,
      without loss of generality, we assume an indexing of the arms such that
      \small
      \begin{equation}\label{model}
        \mu_1 \geq \mu_2 \geq \cdots \geq \mu_m \geq \xi \geq \mu_{m+1} \geq \cdots \geq \mu_K. \nn
      \end{equation}
      \normalsize
      The agent is naturally unaware of this indexing.
      At each round $t$, the agent pulls an arm $a(t) \in [K]$
      and receives an i.i.d.\,reward drawn from distribution $\nu_{a(t)}$.
      The agent outputs an arm when it is identified as a good one.
      The agent repeats this process until there remain no good arms, 
      where the stopping time is denoted by $\tau_\mathrm{stop}$.    
      To be more precise, the agent outputs $\hat{a}_1, \hat{a}_2, \ldots,\hat{a}_{\hat{m}}$ 
      as good arms  (which are different from each other) at rounds $\tau_1, \tau_2, \ldots,\tau_{\hat{m}}$, respectively, where $\hat{m}$
      is the number of arms that the agent outputs as good ones.
      The agent stops working after outputting $\bot$ (NULL) at round $\tau_\mathrm{stop}$
      when the agent finds that there remain no good arms.
      If all arms are identified as good ones, then the agent stops after outputting $\hat{a}_K$ and $\bot$ together at the same round.
      For $\lambda>\hat{m}$ we define
      $\tau_{\lambda}=\tau_{\mathrm{stop}}$.        
      Now, we introduce the definitions of ($\lambda$, $\delta$)-PAC (Probably Approximately Correct) and $\delta$-PAC.   
      \begin{definition}[$(\lambda, \delta)$-PAC]\label{lambdaPAC}
        An algorithm satisfying the following
        conditions is called ($\lambda$, $\delta$)-PAC:
        if there are at least $\lambda$ good arms then
        $\mathbb{P}[\{\hat{m}< \lambda\}\,\cup\,\bigcup_{i\in  \{\hat{a}_1, \hat{a}_2, \ldots, \hat{a}_\lambda \}  }\{\mu_i< \xi\}]\le \de$
        and
        if there are less than $\lambda$ good arms then
        $\mathbb{P}[ \hat{m}\ge \lambda]\le \de$.
      \end{definition}
      \begin{definition}[$\delta$-PAC]\label{deltaPAC}
        An algorithm is called $\delta$-PAC if
        the algorithm is $(\lambda, \delta)$-PAC
        for all $\lambda\in [K]$.
      \end{definition}
  
      The agent aims to minimize $\{ \tau_1, \tau_2, \ldots, \tau_\mathrm{stop} \}$ simultaneously by a $\delta$-PAC algorithm.
      On the other hand, the minimization of $\tau_{\mathrm{stop}}$ corresponds to
      the thresholding bandit if we consider the fixed confidence setting.

      As we can easily see from these definitions,
      the condition for a ($\lambda,\delta$)-PAC algorithm is
      weaker than that for a $\delta$-PAC algorithm.
      Thus, there is a possibility that we can construct a good algorithm
      to minimize $\tau_{\lambda}$ by using a $(\lambda,\delta)$-PAC algorithm
      rather than a $\delta$-PAC algorithm if a specific value of $\lambda$ is considered.
      Nevertheless, we will show that the lower bound on $\tau_{\lambda}$
      for $(\lambda,\delta)$-PAC algorithms can be achieved
      by a $\delta$-PAC algorithm without knowledge of $\lambda$.

      \begin{table}[t]
        \caption{Notation List}
        \vspace{2mm}
        \hrule \vspace{2mm}
        \begin{tabular}{r l }
          $K$      & \   Number of arms. \\
          $m$      & \   Number of good arms (unknown). \\
          $\hat{m}$ & \ Number of arms that the agent outputs \\
                    & \  before outputting  $\bot$  (NULL). \\
          $\delta$ & \   Acceptance error rate. \\
          $\xi$    & \   Threshold determining whether arms\\
            \      & \  are good or not. \\
            $a(t)$ & \   Pulled arm at round $t$. \\
          $\mu_i$  & \   Expected reward of arm $i$ (unknown).\\
          $\hat{\mu}_i(t)$ & \  Empirical mean of the rewards of arm $i$\\
          \           & \  by the end of round $t$. \\ 
          $\hat{\mu}_{i,n}$ & \ Empirical mean of the rewards when \\
          \                 & \ arm $i$ has been pulled $n$ times. \\        
          $N_i(t)$ & \   Number of samples of arm $i$ which has\\
              \    & \   been pulled by the end of round $t$. \\
          $\tau_\lambda$ & \  Round that agent identifies $\lambda$ good arms. \\
          $\tau_\mathrm{stop}$ & \ Round that agent outputs $\bot$ (NULL). \\
        \end{tabular}\label{Notation1}
        \vspace{-2mm}
        \small
        \begin{align}
          \mut_i(t) &=\muhat_i(t)+\sqrt{\frac{\log t}{2N_i(t)}}\nn
          \omu_i(t)&=\muhat_i(t)+\sqrt{\frac{\log(4KN_i^2(t)/\delta)}{2N_i^2(t)}}\nn
          \umu_i(t)&=\muhat_i(t)-\sqrt{\frac{\log(4KN_i^2(t)/\delta)}{2N_i(t)}}\nn
          \De_i&=|\mu_i-\xi| \nn
          \De_{i,j}&=\mu_i-\mu_j\nn
          \Delta&=\min\left\{\min_{i\in[K]}\De_i,\allowbreak\min_{\lambda \in [K-1]}\De_{\la,\la+1}/2\right\}\nn
          n_i&=\frac{1}{(\De_i-\ep)^2}
          \log\pax{ 
          \frac{4\sqrt{K/\de}}{(\De_i-\ep)^2}\log \frac{5\sqrt{K/\de}}{(\De_i-\ep)^2}
          }\n
        \end{align}
        \hrule 
        \normalsize
      \end{table} 
  
    \subsection{Lower Bound on the Sample Complexity}\label{LowerBound}
        We give a lower bound on the sample complexity for GAI.
        This proof is given in Section \ref{proof_lower_bound}.
        {\allowdisplaybreaks[0]
        \begin{theorem}\label{thm_lower}
            Under any $(\lambda, \delta)$-PAC algorithm, if there are $m \ge \la$ good arms, then
            \begin{align}\label{lower}
                \E[\tau_{\la}]
                 &\ge
                 \left(
                     \sum_{i=1}^{\la}\frac{1}{d(\mu_i,\xi)}\log\frac{1}{2\delta}
                 \right)
                 -
                 \frac{m}{d(\mu_{\la},\xi)}\com
            \end{align}
            where $d(x,y) = x\log(x/y)+(1-x)\log((1-x)/(1-y))$ is the binary relative entropy, with convention that $d(0,0)=d(1,1)=0$.
        \end{theorem}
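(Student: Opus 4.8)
The plan is a change-of-measure (``transportation'') lower bound of Lai--Robbins type \citep{LAI19854,JMLR:v17:kaufman16a}, organized around the threshold $\xi$ rather than around a best arm. We may assume $\delta<1/2$ and $\E[\tau_\la]<\infty$, since otherwise the right-hand side of \eqref{lower} is non-positive or infinite and there is nothing to prove; then $\tau_\la$ is an a.s.\ finite stopping time, and for any Bernoulli instance $\mu'$ and any event $\mathcal A\in\mathcal F_{\tau_\la}$ the standard transportation inequality (Wald's identity on the log-likelihood ratio, followed by data processing) holds:
\[ \sum_{i=1}^K \E[N_i(\tau_\la)]\,d(\mu_i,\mu_i') \;\ge\; d\bigl(\mathbb P_\mu(\mathcal A),\mathbb P_{\mu'}(\mathcal A)\bigr). \]

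The engine of the argument is a per-arm inequality for each good arm $j\in\{1,\dots,m\}$. Let $\mu^{(j)}$ agree with $\mu$ except that $\mu_j$ is lowered to some $\mu_j'<\xi$, and apply the transportation inequality with the event $\mathcal A=\mathcal E_j:=\{\hat m\ge\la\}\cap\{j\in\{\hat a_1,\dots,\hat a_\la\}\}$, which lies in $\mathcal F_{\tau_\la}$ because by round $\tau_\la$ the agent has either already named its $\la$-th good arm or declared $\bot$. Since arm $j$ is not good under $\mu^{(j)}$, the event $\mathcal E_j$ is contained in the $(\la,\delta)$-PAC failure event for $\mu^{(j)}$ --- when $m>\la$ it puts a non-good arm among the first $\la$ outputs, and when $m=\la$ it forces $\hat m\ge\la$ although only $\la-1$ good arms remain --- so $\mathbb P_{\mu^{(j)}}(\mathcal E_j)\le\delta$. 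As $\mu^{(j)}$ differs from $\mu$ only in arm $j$, combining the transportation inequality with $d(x,y)\ge x\log(1/y)-\log 2$ and letting $\mu_j'\uparrow\xi$ (so $d(\mu_j,\mu_j')\to d(\mu_j,\xi)$) yields, with $q_j:=\mathbb P_\mu(\mathcal E_j)$,
\[ \E[N_j(\tau_\la)] \;\ge\; \frac{\max\{\,q_j\log(1/\delta)-\log 2,\ 0\,\}}{d(\mu_j,\xi)},\qquad j=1,\dots,m. \]
Applying $(\la,\delta)$-PAC to $\mu$ itself shows that with probability at least $1-\delta$ the agent outputs $\la$ distinct good arms, so $\sum_{j=1}^m\mathbbm{1}[\mathcal E_j]=\la$ on that event and therefore $\sum_{j=1}^m q_j\ge\la(1-\delta)$.

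Summing the per-arm bounds and using $\E[\tau_\la]\ge\sum_{j=1}^m\E[N_j(\tau_\la)]$ reduces the theorem to minimizing $\sum_{j=1}^m\max\{q_j\log(1/\delta)-\log 2,0\}/d(\mu_j,\xi)$ over $q\in[0,1]^m$ with $\sum_j q_j\ge\la(1-\delta)$. Because $d(\cdot,\xi)$ is increasing on $[\xi,1]$, the weights are ordered $d(\mu_1,\xi)\ge\cdots\ge d(\mu_m,\xi)$, so this separable piecewise-linear program is solved by water-filling: put all mass above the kink $c:=\log2/\log(1/\delta)$ on the cheapest (smallest-index) arms and leave the remaining arms at level $c$. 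A short computation then gives that the optimum is at least $\log\frac1{2\delta}\bigl(\sum_{i=1}^\la\frac1{d(\mu_i,\xi)}-\frac{\la-K}{d(\mu_\la,\xi)}\bigr)$ with $K:=\frac{\la(1-\delta)-mc}{1-c}$; since $\la-K=\la\delta+c(m-K)\le\la\delta+mc$ and $\log\frac1{2\delta}(\la\delta+mc)\le\frac{\la}{2e}+m\log2<m$, we obtain \eqref{lower}. (The degenerate regime $\la(1-\delta)\le mc$, where the minimum is $0$, is dispatched separately: there $m$ is already so large that $m/d(\mu_\la,\xi)$ dominates the main term.)

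The step I expect to be the main obstacle is precisely this optimization: one must ensure that hard-to-confirm extra good arms $j>\la$ (those with $d(\mu_j,\xi)$ small) cannot cheaply absorb the budget $\sum_j q_j\ge\la(1-\delta)$ --- which is exactly the purpose of the kink $c$, i.e.\ of the positive part, together with the ordering of the $d(\mu_j,\xi)$ --- and that the lower-order losses ($\la\delta$ from the $1-\delta$ slack, $mc$ from the $m$ kinks, and the passage from $\log\frac1\delta$ to $\log\frac1{2\delta}$) are all absorbed into the single correction $m/d(\mu_\la,\xi)$. The rest is routine bookkeeping: the measurability and integrability conditions for the transportation inequality, and justifying the limit $\mu_j'\uparrow\xi$.
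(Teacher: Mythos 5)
Your proposal is correct and follows essentially the same route as the paper: the same single-arm change of measure $\mu_j\mapsto\xi-\epsilon$ for each good arm $j$, the same event $\mathcal{E}_j$ and per-arm transportation bound $\E[N_j]\,d(\mu_j,\xi)\ge\max\{p_j\log\frac{1}{\delta}-\log 2,0\}$, the same constraint $\sum_{j=1}^m p_j\ge\lambda(1-\delta)$ from the $(\lambda,\delta)$-PAC property, and the same reduction to a piecewise-linear program over the $p_j$. The only difference is in solving that program: you minimize the primal directly by water-filling (using the ordering $d(\mu_1,\xi)\ge\cdots\ge d(\mu_m,\xi)$ and absorbing the $\lambda\delta$ and $mc$ losses into $m/d(\mu_\lambda,\xi)$), whereas the paper exhibits an explicit feasible solution of the LP dual; both computations check out and yield the stated constant.
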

        }
        This lower bound on the sample complexity for GAI is given in terms of top-$\lambda$ expectations $\{\mu_i\}_{i=1}^{\lambda}$.
        In the next section we confirm that this lower bound is tight up to the logarithmic factor $\lo(\log \frac{1}{\delta})$.

\section{Algorithms}
    In this section, we first consider naive algorithms based on other pure-exploration problems.
    Next, we propose an algorithm for GAI and bound its sample complexity from above.
    Pseudo codes of all the algorithms are described in Algorithm \ref{Alg1}.
    These algorithms can be decomposed into two components: a sampling strategy and an identification criterion.
    A sampling strategy is a policy to decide which arm the agent pulls.
    An identification criterion is a policy for the agent to decide whether arms are good or bad.
    All the algorithms adopt the same identification criterion of Lines 5--11 in Algorithm \ref{Alg1},
    which is based on the Lower Confidence Bound (LCB) for the best arm identification \citep{ICML12-shivaram}.
    See Remark 3 at the end of Section 3.2 for other choices of identification criteria.

    \subsection{Naive Algorithms}
        We consider two naive algorithms: the Lower and Upper Confidence Bounds algorithm for GAI (LUCB-G), which is based on the LUCB algorithm for the best arm identification \citep{ICML12-shivaram}
        and the Anytime Parameter-free Thresholding algorithm for GAI (APT-G), which is based on the APT algorithm for the thresholding bandit problem \citep{Locatelli:2016:OAT:3045390.3045569}.
        In both algorithms, the sampling strategy is the same as the
        original algorithms.
        These algorithms sample all arms at the same order $\lo \left(\log \frac{1}{\delta} \right)$.
    \subsection{Proposed Algorithm}
        We propose a Hybrid algorithm for the Dilemma of Confidence (HDoC).  
        The sampling strategy of HDoC is based on the UCB score of the cumulative regret minimization \citep{Auer2002}.
        As we will see later, the algorithm stops within $t=\lo(\log \frac{1}{\delta})$ rounds with high probability.
        Thus, the second term of the UCB score of HDoC in \eqref{score_HDoC} is 
        $\lo \left( \sqrt{ \frac{\log \log (1/\delta)}{N_i (t)} } \right),$
        whereas 
        that of LUCB-G in \eqref{score_LUCB-G} is $\lo \left(  \sqrt{ \frac{\log (1/\delta)}{N_i (t)} } \right)$.
        Therefore,
        the HDoC algorithm pulls the currently best arm more frequently than LUCB-G,
        which means that HDoC puts more emphasis on exploitation than exploration. 

        \begin{algorithm}[htb]
            \caption{ HDoC / LUCB-G / APT-G}
            \begin{algorithmic}[1]
                \STATE {\bf Input:} a threshold $\xi$, an acceptance error rate $\delta$ \\ and a set of arms $\calA \gets [K]$.
                \STATE Pull each arm once.
                \REPEAT        
                    \STATE 
                            HDoC: Pull arm $\hat{a}^* = \argmax_{i \in \calA} \tilde{\mu}_i (t)$ for\\
                            \vspace{-2mm}
                            \small
                            \begin{align}\label{score_HDoC}
                            \tilde{\mu}_i (t) = \hat{\mu}_i (t) + \sqrt{\frac{\log t}{2N_i(t)}} \per
                            \end{align}
                            \normalsize
                            \vspace{-2mm}\\
                            LUCB-G: Pull arm $\hat{a}^* = \argmax_{i \in \calA} \overline{\mu}_i (t)$ for\\ 
                            \vspace{-2mm}
                            \small
                            \begin{align}\label{score_LUCB-G}
                            \overline{\mu}_i (t) = \hat{\mu}_i (t) + \sqrt{\frac{\log(4KN_i^2(t)/\delta)}{2N_i(t)}} \per
                            \end{align}  
                            \normalsize
                            \vspace{-2mm}
                            .\\
                            APT-G: Pull arm $\hat{a}^* = \argmin_{i \in \calA} \beta_i (t)$ for\\ 
                            \vspace{-2mm}
                            \[\beta_i (t) = \sqrt{N_i(t)} \, |\xi - \hat{\mu}_i (t)| \per \] \\
                        \IF{$\underline{\mu}_{\hat{a}^*}(t) = \hat{\mu}_{\hat{a}^*}(t) - \sqrt{\frac{\log(4KN_{\hat{a}^*}^2(t)/\delta)}{2N_{\hat{a}^*}(t)}} \geq \xi $}
                            \STATE Output $\hat{a}^*$ as a good arm.

                            \STATE Delete $\hat{a}^*$ from $\calA$.
                        \ENDIF
                        \IF{$\overline{\mu}_{\hat{a}^*} =  \hat{\mu}_{\hat{a}^*} (t) + \sqrt{\frac{\log(4KN_{\hat{a}^*}^2(t)/\delta)}{2N_{\hat{a}^*}(t)}} < \xi$}
                            \STATE Delete $\hat{a}^*$ from $\calA$.
                        \ENDIF
                \UNTIL{$\overline{\mu}_i < \xi, \, \forall i \in\calA \per$}
            \end{algorithmic} \label{Alg1}
        \end{algorithm}

        The correctness of the output of the HDoC algorithm can be verified by the following theorem, whose proof is given in Appendix \ref{proof_error}.
        \begin{theorem}\label{thm_error}
            The HDoC algorithm is $\delta$-PAC.
        \end{theorem}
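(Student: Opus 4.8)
The plan is to show that the identification criterion in Lines 5--11 of Algorithm \ref{Alg1} never misclassifies any arm, except on an event of probability at most $\delta$, regardless of the sampling strategy. Since the criterion is the only place where arms are declared good or deleted, controlling its failure probability suffices to establish both conditions in Definition \ref{lambdaPAC} for every $\lambda \in [K]$, which is exactly $\delta$-PAC-ness by Definition \ref{deltaPAC}.

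First I would set up the key concentration event. For each arm $i$ and each value $n \ge 1$ of its pull count, Hoeffding's inequality gives $\mathbb{P}[|\hat{\mu}_{i,n} - \mu_i| \ge \sqrt{\log(4Kn^2/\delta)/(2n)}] \le 2\exp(-\log(4Kn^2/\delta)) = \delta/(2Kn^2)$. Define the bad event $\mathcal{E}$ as the union over all $i \in [K]$ and all $n \ge 1$ of these deviations. A union bound yields $\mathbb{P}[\mathcal{E}] \le \sum_{i=1}^{K}\sum_{n\ge 1}\delta/(2Kn^2) = \frac{\delta}{2K}\cdot K \cdot \frac{\pi^2}{6} = \frac{\pi^2}{12}\delta < \delta$. (This is exactly why the confidence radius uses $\log(4KN_i^2(t)/\delta)$: the extra $N_i^2$ makes the sum over $n$ converge, and the factor $4K$ absorbs the $\pi^2/6$ and the factor $2$ from the two-sided bound with room to spare.)

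Next I would argue deterministically on the complement $\mathcal{E}^c$. On $\mathcal{E}^c$ we have $\umu_i(t) \le \mu_i \le \omu_i(t)$ for every arm $i$ at every round $t$. Consequently: (i) if the algorithm ever outputs arm $\hat{a}^*$ as good, it is because $\umu_{\hat{a}^*}(t) \ge \xi$, hence $\mu_{\hat{a}^*} \ge \umu_{\hat{a}^*}(t) \ge \xi$, so every arm output as good is genuinely good — this rules out $\bigcup_{i \in \{\hat{a}_1,\dots,\hat{a}_\lambda\}}\{\mu_i < \xi\}$; (ii) if the algorithm ever deletes arm $i$ via Line 9, it is because $\omu_i(t) < \xi$, hence $\mu_i \le \omu_i(t) < \xi$, so no good arm is ever discarded as bad. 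Now for the two PAC conditions: if there are at least $\lambda$ good arms, no good arm is deleted by (ii), so the algorithm cannot terminate (the until-condition $\omu_i < \xi\ \forall i$ fails as long as a good arm remains in $\calA$, again using $\omu_i \ge \mu_i \ge \xi$), and it continues sampling until all remaining good arms are output — wait, here I need that the algorithm actually does make progress, i.e. that it does not loop forever without outputting; I will return to this. Granting termination, at least $\lambda$ good arms get output, so $\hat m \ge \lambda$, and by (i) all outputs are good, giving the first condition. Conversely, if there are fewer than $\lambda$ good arms, by (i) every output is good, so $\hat m \le m < \lambda$, giving the second condition. All of this holds on $\mathcal{E}^c$, so each condition fails with probability at most $\mathbb{P}[\mathcal{E}] < \delta$.

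The main obstacle is the termination/progress issue flagged above: $\delta$-PAC-ness as stated quantifies only over the probabilistic misclassification events, but implicitly one needs the algorithm to terminate almost surely (otherwise $\hat m$ and $\tau_{\mathrm{stop}}$ are ill-defined). I would handle this by noting that on $\mathcal{E}^c$ a good arm $i$ has $\hat{\mu}_{i,n} \to \mu_i \ge \xi$ and its lower confidence radius $\sqrt{\log(4Kn^2/\delta)/(2n)} \to 0$, and a bad arm has $\hat{\mu}_{i,n}\to \mu_i < \xi$ with the upper radius $\to 0$; so any arm pulled infinitely often is eventually output or deleted. Since at each round some arm in $\calA$ is pulled and $\calA$ is finite and non-increasing, $\calA$ must empty out (equivalently the until-condition is met), so $\tau_{\mathrm{stop}} < \infty$ on $\mathcal{E}^c$. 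For the $\mathcal{E}$-branch, termination in finite expected time is not required for the PAC definition itself — only the stated probability bounds are — but it is cleaner to invoke the finite-sample-complexity bound of the next theorem (Theorem analogous to the $n_i$ bookkeeping) to conclude a.s. termination outright; I would cite that forward. The rest is routine and the constants were evidently reverse-engineered to make $\mathbb{P}[\mathcal{E}] < \delta$ clean.
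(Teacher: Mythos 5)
Your proof is correct and follows essentially the same route as the paper: Hoeffding's inequality plus a union bound over all arms and all pull counts via the $\log(4Kn^2/\delta)$ radius (the paper's Lemma~\ref{lem_error} does this one-sidedly, getting $\delta/K$ per arm, where you use a single two-sided event of total mass $\pi^2\delta/12<\delta$ --- an immaterial difference), followed by the observation that any misclassification forces one of these deviation events. The termination issue you flag is real but is equally present in the paper's own argument, whose implication ``$\hat m<\lambda$ implies some good arm is regarded as bad'' tacitly assumes every arm is eventually output or deleted; on the concentration event this holds whenever all $\mu_i\neq\xi$ (as you argue), and genuinely fails on the boundary $\mu_i=\xi$ (cf.\ Medical~2, where $\tau_3=\infty$ with high probability), so your explicit treatment is if anything more careful than the paper's.
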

        This theorem means that the HDoC algorithm outputs a bad arm
        with probability at most $\delta$.

        Next we give an upper bound on the sample complexity of HDoC.
        We bound the sample complexity in terms of $\De_i=|\mu_i-\xi|$ and $\De_{i,j}=\mu_i-\mu_j$.
        \begin{restatable}{theorem}{upbound}\label{thm_upper}
            Assume that $\De_{\la,\la+1}>0$.
            Then, for any $\la\le m$ and
            $\ep<\min\{\min_{i\in[K]}\De_i,\,\De_{\la,\la+1}/2\}$,
            \begin{align}
                \E[\tau_{\la}]
                &\le
                \sum_{i\in [\la]}n_i
                +
                \sum_{i\in [K]\setminus[\la]}
                \left(
                \frac{\log (K\max_{j\in [K]}n_j )}{2(\De_{\la,i} - 2\epsilon)^2}
                +\de
                n_i\right) \nn
                &\quad+
                \frac{K^{2-\frac{\ep^2}{(\min_{i \in [K]} \De_i-\ep)^2}}}{2\ep^2}
                +
                \frac{K(5+\log\frac{1}{2\epsilon^2})}{4\epsilon^2}\com\nn
                \E[\tau_{\mathrm{stop}}]
                &\le
                \sum_{i\in [K]}n_i+\frac{K}{2\ep^2}\com\n 
            \end{align}
            where
            \begin{align}
                n_i=\frac{1}{(\De_i-\ep)^2}
                \log\pax{
                \frac{4\sqrt{K/\de}}{(\De_i-\ep)^2}\log \frac{5\sqrt{K/\de}}{(\De_i-\ep)^2}
                }\per\n
            \end{align}
        \end{restatable}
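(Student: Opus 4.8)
The plan is to decompose $\tau_{\la}=\sum_{i\in[K]}N_i(\tau_{\la})$ and $\tau_{\mathrm{stop}}=\sum_{i\in[K]}N_i(\tau_{\mathrm{stop}})$ and bound the expected number of pulls of each arm, treating the top-$\la$ arms (which are all good, since $\la\le m$) separately from the rest. The backbone is a deterministic ``classification after $n_i$ pulls'' lemma: if arm $i$ has been pulled $N\ge n_i$ times and $\muhat_{i,N}\ge\mu_i-\ep$ (resp.\ $\muhat_{i,N}\le\mu_i+\ep$), then $\umu_i(t)\ge\xi$ (resp.\ $\omu_i(t)<\xi$), so a good arm is output and a bad arm is deleted. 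Here $n_i$ is chosen as an explicit upper bound on the solution of $N\ge\frac{1}{2(\De_i-\ep)^2}\log(4KN^2/\delta)$ --- i.e.\ the number of pulls after which the confidence half-width $\sqrt{\log(4KN^2/\delta)/(2N)}$ drops below $\De_i-\ep$ --- obtained from the standard inversion of inequalities of type $x\ge a\log(bx)$; using a fixed one-sided $\ep$-deviation rather than the anytime bound is what will make the later tail sums over sample sizes geometric.

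Given this lemma, the $\tau_{\mathrm{stop}}$ bound is the easy half. If arm $i$ is pulled more than $k\ge n_i$ times, then after its $k$-th pull it was neither output nor deleted, which by the lemma forces $|\muhat_{i,k}-\mu_i|>\ep$; hence $\mathbb{P}(N_i(\tau_{\mathrm{stop}})\ge k+1)\le e^{-2k\ep^2}$ for $k\ge n_i$, and summing the geometric tail gives $\E[N_i(\tau_{\mathrm{stop}})]\le n_i+\frac{e^{-2n_i\ep^2}}{1-e^{-2\ep^2}}\le n_i+\frac{1}{2\ep^2}$ (using $n_i\ge1$ and $1-e^{-x}\ge xe^{-x}$), so that $\E[\tau_{\mathrm{stop}}]\le\sum_{i\in[K]}n_i+\frac{K}{2\ep^2}$. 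The same tail estimate gives $\E[(N_i(\tau_{\la})-n_i)^+]\le\frac{1}{2\ep^2}$ for every arm, since $\tau_{\la}\le\tau_{\mathrm{stop}}$; in particular the top $\la$ arms contribute at most $\sum_{i\in[\la]}n_i$ plus an $O(\la/\ep^2)$ excess absorbed into the last two terms of the statement.

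The main work is bounding $\E[N_i(\tau_{\la})]$ for $i\in[K]\setminus[\la]$ by a UCB pull-counting argument against a moving target. The key observation is that before round $\tau_{\la}$ at least one of arms $1,\dots,\la$ is still in $\calA$ (else $\la$ good arms have already been output); letting $\ell(t)$ be the smallest such index, one checks $\calA(t)\subseteq\{\ell(t),\dots,K\}$, so $\mu_{\ell(t)}=\max_{j\in\calA(t)}\mu_j\ge\mu_{\la}$. Hence, whenever arm $i\ (>\la)$ is pulled at a round $t<\tau_{\la}$ we have $\mut_i(t)\ge\mut_{\ell(t)}(t)$, and unless one of the $\ep$-deviation events $\{\muhat_i>\mu_i+\ep\}$ or $\{\muhat_{\ell(t)}<\mu_{\ell(t)}-\ep\}$ occurs this forces $N_i(t)\le\frac{\log t}{2(\De_{\la,i}-2\ep)^2}$; the denominator is positive because $\De_{\la,i}\ge\De_{\la,\la+1}>2\ep$ by the hypothesis on $\ep$. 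To break the circular dependence on $t=\tau_{\la}$, set $B=K\max_{j\in[K]}n_j$ and split on $\{\tau_{\mathrm{stop}}\le B\}$: on this event $\log t\le\log B=\log(K\max_{j\in[K]}n_j)$, which yields the first sum over $i\in[K]\setminus[\la]$; the union bounds over the $\ep$-deviation events (summed over arms and sample sizes, summable by the geometric-tail estimate of the previous paragraph) together with the $\delta$-level error events produce the $\delta n_i$ corrections and the $\frac{K(5+\log\frac{1}{2\ep^2})}{4\ep^2}$ term; and on $\{\tau_{\mathrm{stop}}>B\}$ we use $\tau_{\la}\le\tau_{\mathrm{stop}}$ with the fast decay of $\mathbb{P}(\tau_{\mathrm{stop}}>t)$, so that $\E[\tau_{\mathrm{stop}}\mathbbm{1}(\tau_{\mathrm{stop}}>B)]$ is of order $\frac{K^{2-\ep^2/(\min_i\De_i-\ep)^2}}{2\ep^2}$, since $\max_{j}n_j$ is attained at the arm with $\De_i=\min_i\De_i$.

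The hard part is this last step: making the moving-target UCB argument rigorous (in particular, verifying that $\ell(t)$ is a maximal-mean arm of $\calA(t)$ and controlling the rare event that all of $1,\dots,\la$ leave $\calA$ before $\tau_{\la}$, which can only happen if a good arm is wrongly deleted and hence has probability controlled by the $\delta$-PAC analysis of Theorem~\ref{thm_error}); resolving the circularity between the pull-count bound and the random horizon $\tau_{\la}$ via the a priori tail bound on $\tau_{\mathrm{stop}}$; and the bookkeeping that makes all the union bounds over arms and sample sizes collapse into exactly the three error terms in the statement. Establishing the classification lemma and the $\tau_{\mathrm{stop}}$ bound is comparatively routine.
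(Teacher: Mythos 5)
Your overall architecture coincides with the paper's: the same $n_i$ obtained by inverting the confidence width down to $\Delta_i-\epsilon$, the same geometric tails giving $n_i+\tfrac{1}{2\epsilon^2}$ per arm for $\tau_{\mathrm{stop}}$ and for the top-$\lambda$ arms, the same horizon $T=K\max_j n_j$ with a fast-decaying tail beyond it (Lemma~\ref{lem_ift}), the same UCB pull-count $\tfrac{\log T}{2(\Delta_{\lambda,i}-2\epsilon)^2}$ for $i>\lambda$, and the same treatment of the event that all of $[\lambda]$ leave $\mathcal{A}$ before $\tau_\lambda$ (which forces a wrongly deleted good arm and yields the $\delta n_i$ terms). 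There is, however, one concrete gap in your mechanism for the excluded case in which the surviving top-$\lambda$ ``leader'' $j=\ell(t)$ has a deviating empirical mean. You propose to control these rounds by ``union bounds over the $\epsilon$-deviation events summed over arms and sample sizes, summable by the geometric-tail estimate.'' That estimate bounds the expected number of \emph{sample sizes} $n$ at which $\hat{\mu}_{j,n}$ deviates, but what must be counted is the number of \emph{rounds} $t$ on which some arm $i>\lambda$ is pulled while the leader's index is low; throughout all such rounds $N_j(t)$ is frozen at a single deviating value $n$, so one bad realization of $\hat{\mu}_{j,n}$ can be charged very many times, and the geometric tail over $n$ does not close the argument (a crude bound would be $T$ times a deviation probability, which is far too large).

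What rescues this step---and what your sketch is missing---is that the exploration bonus $\sqrt{\log t/(2n)}$ grows with $t$, so the leader can satisfy $\tilde{\mu}_j(t)<\mu_\lambda-\epsilon$ with $N_j(t)=n$ only for $t<\mathrm{e}^{2n(\hat{\mu}_{j,n}-\mu_\lambda+\epsilon)^2}$. The paper's Lemma~\ref{lem_bef} therefore bounds the number of such rounds by $\mathrm{e}^{2n(\hat{\mu}_{j,n}-\mu_\lambda+\epsilon)^2}$ on the event $\{\hat{\mu}_{j,n}<\mu_\lambda-\epsilon\}$ and evaluates the expectation of this exponential moment by integration by parts against the Hoeffding tail, obtaining $\mathrm{e}^{-2n\epsilon^2}\bigl(1+\tfrac{1}{4n\epsilon^2}\bigr)$ for each $n$; summing over $n$ is precisely what produces the $\tfrac{\log(1/(2\epsilon^2))}{4\epsilon^2}$ factor in the $\tfrac{K(5+\log\frac{1}{2\epsilon^2})}{4\epsilon^2}$ term that you wrote down but did not derive. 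The remaining components of your proposal (the classification lemma, the $\tau_{\mathrm{stop}}$ bound, the split at $T$, and the $\delta n_i$ correction) match the paper's Lemmas~\ref{lem_ni}--\ref{lem_dec1} and would go through as stated.
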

        We prove this theorem in Appendix \ref{proof_upper_bound}.
        The following corollary is straightforward from this theorem.
        \begin{corollary}\label{cor_upper} 
            Let $\Delta=\min\{\min_{i\in[K]}\De_i,\allowbreak\min_{\lambda \in [K-1]}\De_{\la,\la+1}/2\}$.
            Then, for any $\la \le m$,
            \begin{align}
                \limsup_{\delta\to 0}\frac{\E[\tau_{\la}]}{\log (1/\delta)}
                &\le
                \sum_{i\in[\la]}
                \frac{1}{2\De_i^2}\com\label{cor11} 
                \\
                \limsup_{\delta\to 0}\frac{\E[\tau_{\mathrm{stop}}]}{\log (1/\delta)}
                &\le
                \sum_{i\in[K]}
                \frac{1}{2\De_i^2}\com\label{cor12}
            \end{align}
            \begin{align}
            &\E[\tau_{\la}]
            =
            \lo\left(
            \frac{\la \log \frac{1}{\delta} +(K-\lambda)\log\log \frac{1}{\delta}+K\log \frac{K}{\Delta}}{\De^2}
            \right)\com\label{cor21}\\ 
            &\E[\tau_{\mathrm{stop}}]
            =
            \lo\left(
            \frac{K \log (1/\delta)+K\log (K/\Delta) }{\De^2}
            \right)\per\label{cor22}
            \end{align}
        \end{corollary}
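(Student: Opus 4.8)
The plan is to derive all four estimates directly from Theorem~\ref{thm_upper} by specialising its free parameter $\ep$; no new probabilistic argument is needed. For the asymptotic bounds \eqref{cor11}--\eqref{cor12} I would keep $\ep$ fixed, send $\de\to0$, and only afterwards let $\ep\to0^+$; for the explicit bounds \eqref{cor21}--\eqref{cor22} I would instead fix $\ep$ equal to a constant fraction of $\De$.

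\emph{Asymptotic bounds.} Fix any $\ep$ with $0<\ep<\min\{\min_{i}\De_i,\De_{\la,\la+1}/2\}$. For such a fixed $\ep$ one has $\log\bigl(\tfrac{4\sqrt{K/\de}}{(\De_i-\ep)^2}\log\tfrac{5\sqrt{K/\de}}{(\De_i-\ep)^2}\bigr)=\tfrac12\log\tfrac1\de+\lo(\log\log\tfrac1\de)$, hence $n_i=\tfrac{1}{2(\De_i-\ep)^2}\log\tfrac1\de+\lo(\log\log\tfrac1\de)$; every other term in the $\tau_\la$-bound of Theorem~\ref{thm_upper} is either $\de$-independent or $\lo(\log\log\tfrac1\de)$ (in particular $\sum_{i>\la}\de n_i=\lo(\de\log\tfrac1\de)=\lo(1)$), hence negligible after division by $\log(1/\de)$. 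Dividing by $\log(1/\de)$ and letting $\de\to0$ gives $\limsup_{\de\to0}\E[\tau_\la]/\log(1/\de)\le\sum_{i\in[\la]}\tfrac{1}{2(\De_i-\ep)^2}$, and letting $\ep\to0^+$ yields \eqref{cor11}. The identical argument applied to $\E[\tau_{\mathrm{stop}}]\le\sum_{i\in[K]}n_i+\tfrac{K}{2\ep^2}$ yields \eqref{cor12}.

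\emph{Explicit bounds.} Take $\ep=\De/2$, which is admissible since $\De/2<\De\le\min\{\min_i\De_i,\De_{\la,\la+1}/2\}$. Then $\De_i-\ep\ge\De/2$ for all $i$, and for $i>\la$ we have $\De_{\la,i}-2\ep\ge\De_{\la,\la+1}-\De\ge\De$ because $\De_{\la,\la+1}\ge2\De$. Hence $n_i=\lo\bigl(\De^{-2}(\log\tfrac1\de+\log\tfrac K\De)\bigr)$ uniformly in $i$, giving $\sum_{i\in[\la]}n_i=\lo\bigl(\De^{-2}(\la\log\tfrac1\de+\la\log\tfrac K\De)\bigr)$; moreover $\log(K\max_j n_j)=\lo(\log\tfrac K\De+\log\log\tfrac1\de)$, so the sum over $[K]\setminus[\la]$ is $\lo\bigl(\De^{-2}(K-\la)(\log\log\tfrac1\de+\log\tfrac K\De)\bigr)$, where we also used $\de n_i=\lo(\De^{-2}\log\tfrac K\De)$ since $\de\log\tfrac1\de=\lo(1)$. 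Finally the two $\de$-free terms of Theorem~\ref{thm_upper} depend only on $K$ and $1/\De$; bounding them and folding them into the additive $\lo(\De^{-2}K\log\tfrac K\De)$ term gives \eqref{cor21}, and the analogous computation on $\sum_{i\in[K]}n_i+\tfrac{K}{2\ep^2}=\lo\bigl(\De^{-2}(K\log\tfrac1\de+K\log\tfrac K\De)\bigr)$ gives \eqref{cor22}.

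The substitution itself is routine; the care is in the bookkeeping. One must check that the factor $\log(1/\de)$ is carried only by $\sum_{i\in[\la]}n_i$, so that its coefficient is exactly $\tfrac12\sum_{i\in[\la]}\De_i^{-2}$ (which is what makes this upper bound match the lower bound of Theorem~\ref{thm_lower}), that $\log\log(1/\de)$ is carried only by the $(K-\la)$-fold sum, and, for \eqref{cor21}--\eqref{cor22}, that the $\de$-independent remainder of Theorem~\ref{thm_upper} really collapses into the stated $\lo(K\log(K/\De)/\De^2)$ correction. I expect this last point, together with getting the scaling of $\ep$ right, to be the main obstacle: $\ep$ must be sent to $0$ \emph{after} $\de$ (not jointly) to recover the sharp constant $\tfrac{1}{2\De_i^2}$ in \eqref{cor11}--\eqref{cor12}, yet it must be kept bounded below by a fixed multiple of $\De$ in \eqref{cor21}--\eqref{cor22} so that the $\de$-independent terms remain a bounded function of $(K,\De)$; an intermediate choice of $\ep$ would spoil one regime or the other.
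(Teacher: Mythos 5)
Your proposal is correct and follows essentially the same route as the paper: the paper likewise obtains \eqref{cor11}--\eqref{cor12} from $\limsup_{\de\to 0} n_i/\log(1/\de)=\tfrac{1}{2(\De_i-\ep)^2}$ followed by $\ep\downarrow 0$, and \eqref{cor21}--\eqref{cor22} by substituting $\ep=\Delta/2$ into Theorem~\ref{thm_upper}. Your write-up merely spells out the bookkeeping that the paper leaves implicit.
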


        \begin{proof}
        Since
        \small
        \begin{align}
        \limsup_{\de\to 0}\frac{n_i}{\log (1/\delta)}=\frac{1}{2(\De_i-\ep)^2}\com\n
        \end{align}
        \normalsize
        we obtain \eqref{cor11} and \eqref{cor12} by letting $\epsilon\downarrow 0$.
        We obtain \eqref{cor21} and \eqref{cor22} by letting
        $\ep=\Delta/2$ in Theorem \ref{thm_upper}.
        \end{proof}

        Note that
        $d(\mu_i,\xi)\ge 2(\mu_i-\xi)^2=2\Delta_i^2$ from Pinsker's inequality
        and its coefficient two cannot be improved.
        Thus we see that the upper bound in
        \eqref{cor11} in Corollary \ref{cor_upper}
        is almost 
        optimal in view of the lower bound in Theorem \ref{thm_lower}
        for sufficiently small $\delta$.
        The authors believe that the coefficient $2\De_i^2$ can be improved to $d(\mu_i,\xi)$
        by the techniques in the KL-UCB algorithm (Kullback-Leibler UCB, \citealp{KLUCB}) 
        and the Thompson sampling algorithm \citep{pmlr-v23-agrawal12},
        although we use the sampling strategy based on the UCB algorithm \citep{Auer2002} for simplicity of the analysis.
        Eq.~\eqref{cor21} means that 
        the sample complexity of $\E[\tau_{\la}]$ scales with
        $\lo(\la\log \frac{1}{\delta}+(K-\la) \log\log\frac{1}{\delta})$ for moderately small $\delta$,
        which is contrasted with the sample complexity $\lo(K \log \frac{1}{\delta})$
        for the best arm identification \citep{JMLR:v17:kaufman16a}.
        Furthermore, we see from \eqref{cor12} and \eqref{cor22}
        that the HDoC algorithm reproduces the optimal sample complexity for
        the thresholding bandits \citep{Locatelli:2016:OAT:3045390.3045569}. 

        {\bf Remark 1.}
        We can easily extend GAI in a Bernoulli setting to GAI in a Gaussian setting with known variance $\sigma^2$.
        In the proofs of Theorems \ref{thm_error} and \ref{thm_upper},
        we used the assumption of the Bernoulli reward only in Hoeffding's inequality
        expressed as
        \vspace{-3mm}
        \begin{align}
        \mathbb{P}[\hat{\mu}_{i,n} \le \mu_i - \ep] \le \e^{-2n\ep^2}\com\n
        \end{align}
        where $\muhat_{i,n}$ is the empirical mean of the rewards when arm $i$ has been pulled $n$ times.
        When each reward follows a Gaussian distribution with variance $\sigma^2$,
        the distribution of the empirical mean is evaluated as
        \begin{align}
        \mathbb{P}[\hat{\mu}_{i,n} \le \mu_i - \ep] \le \e^{-\frac{n\ep^2}{2\sigma^2}}\n
        \end{align}
        by Cram\'er's inequality.
        By this replacement
        the score of HDoC becomes   $\tilde{\mu}_i (t) = \hat{\mu}_i (t) + \sqrt{\frac{2\sigma^2 \log t}{N_i(t)}}$,
        the score of LUCB-G becomes $\overline{\mu}_i (t) = \hat{\mu}_i (t) + \sqrt{\frac{2\sigma^2 \log(4KN_i^2(t)/\delta)}{N_i(t)}}$ 
        and the score for identifying good arms becomes $\underline{\mu}_i (t) = \hat{\mu}_i (t) - \sqrt{\frac{2\sigma^2 \log(4KN_i^2(t)/\delta)}{N_i(t)}}$ 
        in a Gaussian setting given variance $\sigma^2$, 
        while the score of APT-G in a Gaussian setting is the same as the score of APT-G in a Bernoulli setting.
      
        {\bf Remark 2.} 
        Theorem 2 and the evalution of $\tau_\mathrm{stop}$ in Theorem 3 do not depend on
        the sampling strategy and only use the fact that the identification criterion is given by Lines 5--11 in Algorithm 1. 
        Thus, these results still hold even if we use the LUCB-G and APT-G algorithms.
 
        {\bf Remark 3.}
        The evaluation of the error probability is based
        on the union bound over all rounds $t\in \mathbb{N}$,
        and the identification criterion in Lines 5--11 in Algorithm \ref{Alg1}
        is designed for this evaluation.
        The use of the union bound does not worsen
        the asymptotic analysis for $\delta\to 0$
        and we use this identification criterion
        to obtain a simple sample complexity bound.
        On the other hand,
        it is known that
        the empirical performance can be considerably improved
        by, for example,
        the bound based on the law of iterated logarithm
        in \citet{pmlr-v35-jamieson14} that can avoid the union bound. 
        We can also use an identification criterion based on such a bound
        to improve empirical performance 
        but this does not affect the result of relative comparison
        since we use the same identification criterion between algorithms with different
        sampling strategies.

    \subsection{Gap between Lower and Upper Bounds}\label{gap}
        As we can see from Theorem \ref{thm_upper} and its proof,
        an arm $i>\lambda$ (that is, an arm other than top-$\lambda$ ones)
        is pulled roughly 
        $\lo( \frac{\log \log (1/\delta)}{\Delta_{\lambda,i}})$ times
        until HDoC outputs $\lambda$ good arms.
        On the other hand, the lower bound in Theorem \ref{thm_lower}
        only considers $\lo(\log \frac{1}{\delta})$ term and
        does not depend on arms $i>\lambda$.
        Therefore, in the case where
        $(K-\lambda)$ is very large compared to $\frac{1}{\delta}$
        (more specifically, in the case of
        $K-\lambda=\Omega\left(\frac{\log (1/\delta)}{\log \log (1/\delta)}\right)$),
        there still exists a gap between the lower bound in \eqref{lower}
        and the upper bound in \eqref{cor21}.  
        Furthermore, the bound in \eqref{cor21} becomes meaningless
        when $\Delta_{\la,\la+1}\approx 0$.
        In fact,
        the $\lo(\log \log \frac{1}{\delta})$ term for small $\Delta_{\la,\la+1}$
        is not negligible 
        in some cases as we will see experimentally in Section \ref{Exp}.
        
        To fill this gap, it is necessary to consider
        the following difference between the cumulative regret minimization
        and GAI.
        Let us consider the case of pulling two good arms with the same expected rewards.
        In the cumulative regret minimization,
        which of these two arms is pulled
        makes no difference in the reward
        and, for example, it suffices for
        pulling these two arms alternately.
        On the other hand in GAI, the agent should output
        one of these good arms as fast as possible; hence,
        it is desirable to pull one of these equivalent arms
        with a biased frequency.
        However,
        the bias in the numbers of samples between seemingly equivalent arms
        increases the risk to miss an actually better arm
        and this dilemma becomes a specific difficulty in GAI.
        The proposed algorithm, HDoC, is not designed to cope with
        this difficulty and,
        improving $\lo(\log \log \frac{1}{\delta})$
        term from this viewpoint is important future work.

\section{Numerical Experiments}\label{Exp}
    In this section we experimentally compare the performance of HDoC with that of LUCB-G and APT-G.
    In all experiments,
    each arm is pulled five times as burn-in
    and
    the results are the averages over
    1,000 independent runs.
    
    \subsection{Threshold Settings}

        We consider three settings named
        Threshold 1--3, which are based on
        Experiment 1-2 in \citet{Locatelli:2016:OAT:3045390.3045569}
        and Experiment 4 in \citet{DBLP:journals/corr/MukherjeeNSR17}.
               
        {\bf Threshold 1 (Three group setting):}
        Ten Bernoulli arms with mean $\mu_{1:3} = 0.1$, $\mu_{4:7} = 0.35 + 0.1 \cdot (0:3)$ and $\mu_{8:10} = 0.9$,
        and threshold $\xi = 0.5$, where $(i:j)$ denotes $\{i,i+1,i+2,\ldots,j-1,j\}$.
        
        {\bf Threshold 2 (Arithmetically progressive setting):}
        Six Bernoulli arms with mean $\mu_{1:6} = 0.1 \cdot (1:6)$ and
        threshold $\xi = 0.35$.
        
        {\bf Threshold 3 (Close-to-threshold setting):}
        Ten Bernoulli arms with mean $\mu_{1:3} = 0.55$ and $\mu_{4:10} = 0.45$ and threshold $\xi = 0.5$.

        \begin{table*}[h]
            \centering
            \caption{Averages and standard deviations of arm-pulls over 1000 independent runs in Threshold 1--3 and Medical 1--2 for $\delta=0.05$. \newline Symbol ``--'' denotes that the agent does not output arm $\lambda \in [K]$ or $\bot$ (NULL) within 100,000 arm-pulls in almost all the runs.}
            \scalebox{0.79}{
            \begin{tabular}{c|cccccccccc} \hline
                Thre. 1    & $\tau_1$           & $\tau_2$  & $\tau_3$   & $\tau_4$  & $\tau_5$   & $\tau_\mathrm{stop}$   &    \\ 
                HDoC            & \textbf{114.0 $\pm$ 21.8}   & \textbf{146.7 $\pm$ 22.6}     & \textbf{186.8 $\pm$ 34.4}   & \textbf{778.7 $\pm$ 741.1}    & \textbf{5629.2 $\pm$ 1759.6}  & 10264.1 $\pm$ 2121.1  \\ 
                LUCB-G          & 134.4 $\pm$  26.6   & 167.3  $\pm$ 27.5   & 197.0 $\pm$ 31.2       & \textbf{798.0 $\pm$ 246.5}   & \textbf{5702.9 $\pm$ 1589.9}   & 10258.2 $\pm$ 2054.9 \\ 
                APT-G            & 6067.5 $\pm$ 1789.3  & 6282.5 $\pm$ 1810.9  & 6473.4 $\pm$ 1820.9  & 8254.0  $\pm$ 1909.5  & 10161.3 $\pm$ 2062.0  & 10243.0 $\pm$ 2062.0 \\ \hline \hline
            
                Thre. 2 & $\tau_1$           & $\tau_2$           & $\tau_3$ &$\tau_\mathrm{stop}$ & \\
                HDoC         & \textbf{202.2 $\pm$ 106.7}   & 825.8 $\pm$ 1048.7     & \textbf{5237.2 $\pm$ 1614.8}   & 10001.2 $\pm$ 2051.5&  \\  
                LUCB-G         & 259.5 $\pm$ 120.4  & \textbf{763.7 $\pm$ 260.4}   & 5566.6 $\pm$ 1575.5   & 9961.8 $\pm$ 1957.7 &  \\
                APT-G          & 6891.3 $\pm$ 1776.4   & 7990.7 $\pm$ 1839.1 & 9971.4 $\pm$ 1976.5 & 10048.9 $\pm$ 1976.5 &   \\ \hline \hline

                Thre. 3 & $\tau_1$           & $\tau_2$           & $\tau_3$ &$\tau_\mathrm{stop}$ &   \\
                HDoC        & \textbf{7081.3 $\pm$ 2808.4}  & \textbf{10955.9 $\pm$ 2954.4}   & 18063.0 $\pm$ 9252.0       & 46136.6 $\pm$ 4699.4 \ &  \\
                LUCB-G         & 10333.0 $\pm$ 3330.0  & 14183.6  $\pm$ 3186.7    & \textbf{17162.8  $\pm$ 2997.7}  & 46059.1 $\pm$ 4740.4 &   \\
                APT-G &  44326.0 $\pm$ 4708.1 & 45212.9 $\pm$ 4727.2 & 45676.7 $\pm$ 4727.8 & 45852.7 $\pm$ 4727.8  &   \\ \hline  \hline 

                Med. 1 & $\tau_1$  & $\tau_\mathrm{stop}$                        & \multicolumn{1}{||c|}{Med. 2} & $\tau_1$   & $\tau_2$   & $\tau_3$ & $\tau_\mathrm{stop}$          \\ 
                HDoC  & \textbf{10170.1 $\pm$ 4276.6}    & 31897.1 $\pm$ 5791.0  &\multicolumn{1}{||c|}{HDoC}    &   \textbf{111.8 $\pm$ 67.9}  & \textbf{267.6 $\pm$ 125.1}   & --  & --     \\ 
                LUCB-G & 10524.9 $\pm$ 3189.7 & 31827.9 $\pm$ 5786.0             & \multicolumn{1}{||c|}{LUCB-G} &   \textbf{110.7 $\pm$ 66.2}  & \textbf{262.3 $\pm$ 117.9}   & --  & --     \\ 
                APT-G & 30847.1 $\pm$ 5724.9  &   31571.7 $\pm$ 5829.4           &\multicolumn{1}{||c|}{APT-G}  &  -- &  -- &  --  & --    \\ \hline 
            \end{tabular}\label{Result1}
            }

            \centering
            \caption{Averages and standard deviations
            of arm-pulls over 1000 independent runs in Threshold 1--3 and Medical 1--2 for $\delta=0.005$.}
            \scalebox{0.79}{
            \begin{tabular}{c|cccccccccc} \hline
                Thre. 1    & $\tau_1$           & $\tau_2$  & $\tau_3$   & $\tau_4$  & $\tau_5$   & $\tau_\mathrm{stop}$   &    \\ 
                HDoC            & \textbf{130.5 $\pm$ 28.6}   & \textbf{168.3 $\pm$ 25.9}     & \textbf{209.6 $\pm$ 37.1}   & \textbf{864.1 $\pm$ 518.1}    & \textbf{6183.0 $\pm$ 1956.0}  & 11464.8 $\pm$ 2238.4& \\
                LUCB-G            & 164.6  $\pm$ 31.4  & 200.4 $\pm$ 31.1   & 235.0 $\pm$ 34.0   & 926.6 $\pm$ 257.3   & 6357.5 $\pm$ 1696.9  & 11323.9 $\pm$  2202.6 & \\
                APT-G             & 7300.6 $\pm$ 1625.6 & 7471.0 $\pm$ 1642.3 & 7647.7 $\pm$ 1661.0 & 9433.1 $\pm$ 1902.8 & 11344.9 $\pm$ 2292.9 & 11417.3 $\pm$ 2294.5 & \\ \hline \hline
            
                Thre. 2 & $\tau_1$           & $\tau_2$           & $\tau_3$ &$\tau_\mathrm{stop}$ & \\
                HDoC         & \textbf{231.3 $\pm$ 122.3 }   & \textbf{897.4 $\pm$ 1012.0} & \textbf{5783.7 $\pm$ 1564.5}   & 11197.2 $\pm$ 2129.6 &  \\ 
                LUCB-G         & 301.1 $\pm$ 127.2 & \textbf{875.4 $\pm$ 265.9 }   & 6163.5 $\pm$ 1612.2  & 11099.1 $\pm$ 2054.0 & \\ 
                APT-G          & 8070.7 $\pm$ 1616.8 & 9159.3 $\pm$ 1726.7 & 11081.3 $\pm$  2067.4 & 11173.4 $\pm$ 2058.2 & \\ \hline \hline
             
                Thre. 3 & $\tau_1$ & $\tau_2$ & $\tau_3$ & $\tau_\mathrm{stop}$  &    \\ 
                HDoC       & \textbf{9721.1 $\pm$ 3396.3}  & \textbf{14019.9 $\pm$ 3154.4}   & \textbf{18711.2 $\pm$ 7128.7}  & 50937.3 $\pm$ 5008.2 & \\ 
                LUCB-G    & 11926.5 $\pm$ 3488.7 & 15943.1 $\pm$ 3145.2 & \textbf{18980.1 $\pm$ 3072.1} & 50700.7 $\pm$ 4803.3                                &   \\ 
                APT-G      & 49632.9 $\pm$ 5010.9 & 50458.2 $\pm$ 4925.2 & 50841.6 $\pm$ 4910.3 & 50989.3 $\pm$ 4905.0                               & \\ \hline \hline
 
                Med. 1   & $\tau_1$  & $\tau_\mathrm{stop}$                        &  \multicolumn{1}{||c|}{Med. 2}     & $\tau_1$ & $\tau_2$   & $\tau_3$ & $\tau_\mathrm{stop}$  &        \\  
                HDoC     & \textbf{11274.7 $\pm$ 4844.1}    & 34840.2 $\pm$ 5942.5 &  \multicolumn{1}{||c|}{HDoC}       &  \textbf{134.5 $\pm$ 71.1}  &  \textbf{310.1 $\pm$ 126.4} & -- & --  & \\ 
                LUCB-G   & 11739.9 $\pm$ 3339.7 & 34595.5 $\pm$ 5890.0             &  \multicolumn{1}{||c|}{LUCB-G}     &  \textbf{135.5 $\pm$ 72.5}  &  \textbf{315.0 $\pm$ 132.6} & -- & --  & \\  
                APT-G        & 34516.4 $\pm$ 5994.6 & 35189.0 $\pm$ 6055.4         &  \multicolumn{1}{||c|}{APT-G}      &  -- &  --  &  --  & -- & \\ \hline 
            \end{tabular}\label{Result2}
            }
            
        \end{table*}

    \subsection{Medical Settings}
      We also consider two medical settings of dose-finding in clinical trials as
      GAI. In general, the dose of a drug is quite important.
      Although high doses are usually more effective than low doses,
      low doses can be effective than high doses because high doses often cause
      bad side effects.
      Therefore, it is desirable to list various doses of a drug with satisfactory effect,
      which can be formulated as GAI.
      We considered two instances of the dose-finding problem based on \citet{seckinumab} and \citet{Liu2017} as Medical 1--2, respectively, specified as follows.
      In both settings, the threshold $\xi$ corresponds to the satisfactory effect.

      {\bf Medical 1 (Dose-finding of secukinumab for rheumatoid arthritis with satisfactory effect):}
      Five Bernoulli arms with mean $\mu_1=0.36$, $\mu_2=0.34$, $\mu_3=0.469$, $\mu_4=0.465$, $\mu_5=0.537$,
      and threshold $\xi=0.5$. 
      
      Here,
      $\mu_1, \mu_2,\ldots, \mu_5$ represent placebo, secukinumab 25mg, 75mg, 150mg and 300mg, respectively.
      The expected reward indicates American College of Rheumatology 20\% Response (ACR20) at week 16 given in \citet[Table 2]{seckinumab}.

      {\bf Medical 2 (Dose-finding of GSK654321 for rheumatoid arthritis with satisfactory effect):}
      Seven Gaussian arms with mean $\mu_1=0.5$, $\mu_2=0.7$, $\mu_3=1.6$, $\mu_4=1.8$, $\mu_5=1.2$, $\mu_6=1.0$ and $\mu_7=0.6$
      with variance $\sigma^2=1.44$ and threshold $\xi=1.2$.
    
      Here, $\mu_1, \mu_2,\ldots, \mu_7$ represent the positive effect\footnote{%
      The original values (smaller than zero)
      in \citet{Liu2017} represent the negative effect
      and we inverted the sign to denote the positive effect.} of      
      placebo, the dose of GSK654321 0.03, 0.3, 10, 20 and 30 mg/kg, respectively,
      where GSK654321 \citep{Liu2017} is a developing drug with nonlinear dose-response, which is based on the real drug GSK315234 \citep{choy}.
      The expected reward indicates change from the baseline in $\Delta$ Disease Activity Score 28 (DAS28) given in \citet[Profile 4]{Liu2017}. 
      The threshold $\xi = 1.2$ is based on \citet{article}.

%
    
    \subsection{Results}
        First we compare HDoC, LUCB-G and APT-G for
        acceptance error rates $\delta = 0.05,\,0.005$.
        Tables \ref{Result1} and \ref{Result2}
        show
        the averages and standard deviations of
        $\tau_1, \tau_2, \ldots, \tau_\lambda$ and $\tau_\mathrm{stop}$
        for these algorithms.
        In most settings, HDoC outperforms LUCB-G and APT-G.
        In particular, the number of samples required for APT-G is very large compared to those required for HDoC or LUCB-G,
        and the stopping times of all algorithms are close as discussed in Remark 2.
        The results verify that HDoC addresses GAI more efficiently than LUCB-G or APT-G.

        In Medical 2, 
        we can easily see $\tau_3, \tau_\mathrm{stop} =+\infty$ with high probability
        since the expected reward $\mu_5$ is equal to the threshold $\xi$.
        Moreover, APT-G fails to work completely,
        since it prefers to pull an arm whose expected reward is closest to the threshold $\xi$
        and selects the arm with mean $\mu_5$ almost all the times.
        In fact, Tables \ref{Result1}--\ref{Result2} show that APT-G cannot identify even one good arm within 100,000 arm-pulls
        whereas HDoC and LUCB-G can identify some good arms reasonably even in such a case.

        As shown in Tables \ref{Result1}--\ref{Result2},  
        the performance of HDoC is almost the same as that of LUCB-G in Medical 2, where 
        the expectations of the arms are very close to each other,
        taking the variance $\sigma^2$ into consideration.
        Figure \ref{plots1} shows the result of an experiment to
        investigate the behavior of HDoC and LUCB-G for Medical 2 in more detail,
        where $\tau_1,\tau_2$ are plotted for (possibly unrealistically) small $\delta$.
        Here ``Lower bound'' in the figure is the asymptotic lower bound $\sum_{i=1}^{\lambda}\frac{2\sigma^2 \log (1/\delta)}{\Delta_i^2}$
        of $\tau_{\lambda}$ for normal distributions (see Theorem \ref{thm_lower} and Remark 1).
        Since the result of HDoC asymptotically approaches to the lower bound,
        the $\lo(\log \frac{1}{\delta})$ term of the sample complexity of HDoC is almost optimal, 
        and the results show that the effect of $\lo(\log \log \frac{1}{\delta})$ term is not negligible for practical acceptance error rates such as $\delta=0.05$ and $ 0.005$.

        \begin{figure}[h] 
            \includegraphics[width=7.7cm]{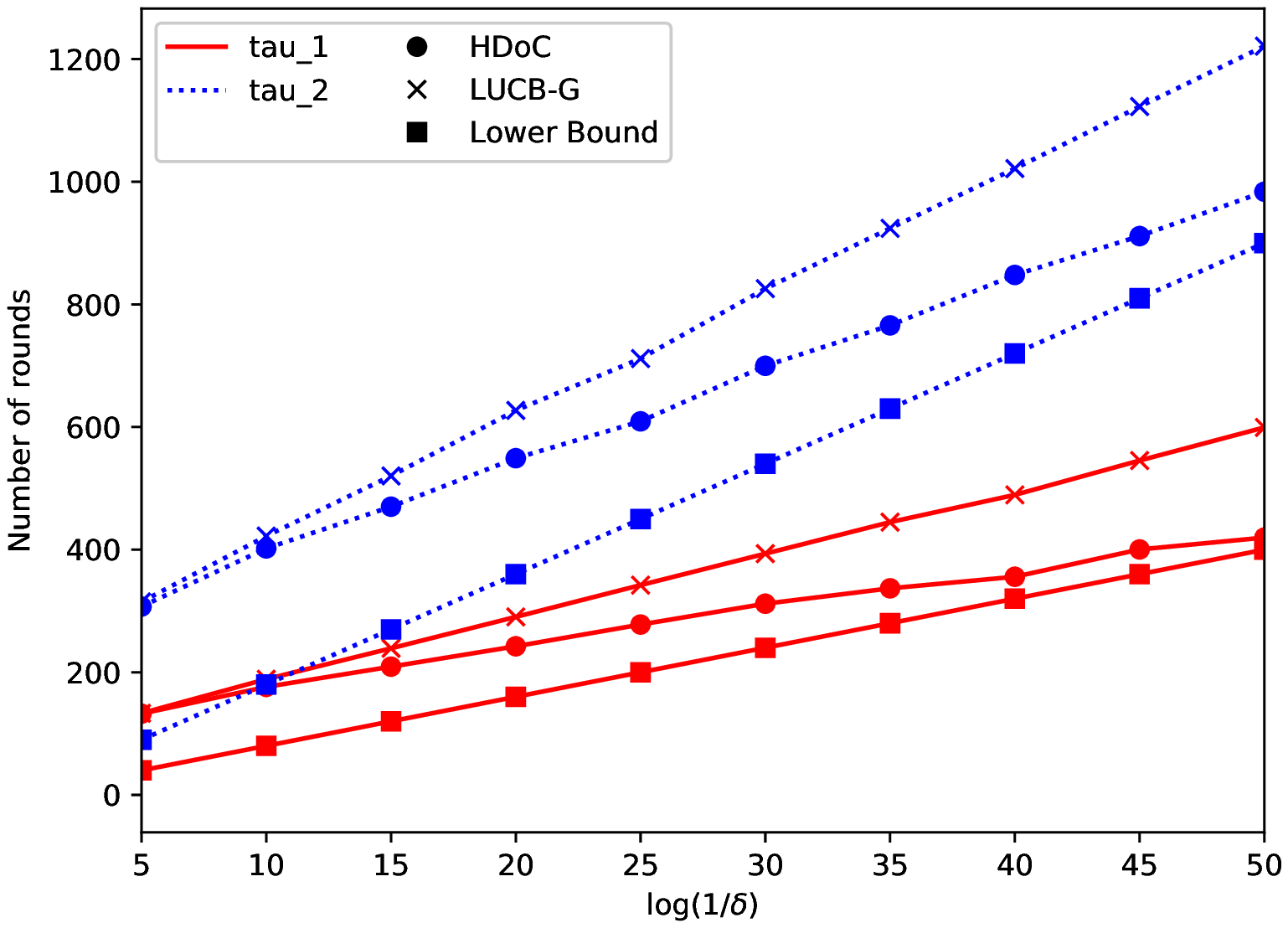}    
            \caption{Number-of-round plots of HDoC, LUCB-G and the lower bound for $\log \frac{1}{\delta}=5,10,\ldots,50$ in Medical 2.}
            \label{plots1}
          \end{figure}

       

\section{Proof of Theorem \ref{thm_lower}}\label{proof_lower_bound}
    In this section, we prove Theorem \ref{thm_lower} based on the following proposition on the expected number of samples
    to distinguish two sets of reward distributions.
    \begin{proposition}[Lemma 1 in \citealp{JMLR:v17:kaufman16a}]\label{prop_kaufmann}
        Let $\nu$ and $\nu'$ be two bandit models with $K$ arms such that for all $i$, 
        the distributions $\nu_i$ and $\nu_i'$ are mutually absolutely continuous. 
        For any almost-surely finite stopping time $\sigma$ and
        event $\calE$,
        \begin{align}
            \sum_{i=1}^K \E[N_i (\sigma)]\mathrm{KL}(\nu_i,\nu_i')\ge
            d(\mathbb{P}_\nu[\calE],\mathbb{P}_{\nu'}[\calE])\com\n
        \end{align}
        where 
        $\mathrm{KL}(\nu_i, \nu_j)$ is the  Kullback-Leibler divergence between distributions $\nu_i$ and $\nu_j$,
        and $d(x,y) = x\log(x/y)\allowbreak +(1-x)\log((1-x)/(1-y))$ is the binary relative entropy, with convention that $d(0,0)=d(1,1)=0$.
    \end{proposition}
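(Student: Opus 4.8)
The plan is to prove the inequality by a change-of-measure (likelihood-ratio) argument combined with the data-processing inequality for relative entropy. Since $\nu_i$ and $\nu_i'$ are mutually absolutely continuous for every $i$, let $f_i$ and $f_i'$ denote their densities with respect to a common dominating measure, and define the accumulated log-likelihood ratio of the observed history under $\nu$ against $\nu'$,
\[
L_t = \sum_{s=1}^{t} \log \frac{f_{a(s)}\bigl(X_{a(s)}(s)\bigr)}{f'_{a(s)}\bigl(X_{a(s)}(s)\bigr)},
\]
where $a(s)$ is the arm pulled at round $s$ and is measurable with respect to the history $\mathcal{F}_{s-1}$ available before round $s$. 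The first goal is to identify $\E[L_\sigma]$ (expectation under $\nu$) with the left-hand side of the claim.

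First I would establish a Wald-type identity. Conditioned on $\mathcal{F}_{s-1}$, the arm $a(s)$ is fixed and $X_{a(s)}(s)$ is drawn from $\nu_{a(s)}$ under $\nu$, so the conditional expectation of the $s$-th increment equals $\mathrm{KL}(\nu_{a(s)}, \nu'_{a(s)})$. Summing these conditional expectations and regrouping the per-round contributions through the pull counts $N_i(\sigma)$ yields
\[
\E[L_\sigma] = \E\Bigl[\,\sum_{i=1}^{K} N_i(\sigma)\, \mathrm{KL}(\nu_i, \nu_i')\Bigr] = \sum_{i=1}^{K} \E[N_i(\sigma)]\, \mathrm{KL}(\nu_i, \nu_i').
\]
Making this rigorous for the random, almost-surely finite stopping time $\sigma$ is where I would invoke optional stopping for the martingale $L_t - \sum_{s\le t}\mathrm{KL}(\nu_{a(s)},\nu'_{a(s)})$; this is where the integrability afforded by $\sum_i \E[N_i(\sigma)]\,\mathrm{KL}(\nu_i,\nu_i')<\infty$ is used (otherwise the asserted bound is vacuous).

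Next I would relate $\E[L_\sigma]$ to the two probabilities of the event $\calE$. The restrictions of $\mathbb{P}_\nu$ and $\mathbb{P}_{\nu'}$ to the stopped $\sigma$-algebra $\mathcal{F}_\sigma$ are mutually absolutely continuous with Radon--Nikodym derivative $\mathrm{d}\mathbb{P}_\nu/\mathrm{d}\mathbb{P}_{\nu'} = \e^{L_\sigma}$, so $\E[L_\sigma]=\mathrm{KL}\bigl(\mathbb{P}_\nu|_{\mathcal{F}_\sigma}, \mathbb{P}_{\nu'}|_{\mathcal{F}_\sigma}\bigr)$. For any $\calE \in \mathcal{F}_\sigma$, applying the data-processing inequality to the two-valued statistic $\mathbf{1}_{\calE}$ contracts this relative entropy to the relative entropy between the Bernoulli laws of $\mathbf{1}_{\calE}$ under the two models, which is exactly $d(\mathbb{P}_\nu[\calE], \mathbb{P}_{\nu'}[\calE])$. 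Chaining the three facts gives the claim.

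I expect the main obstacle to be the measure-theoretic bookkeeping at the stopping time rather than any single inequality. One must verify that $\e^{L_\sigma}$ is genuinely the density on $\mathcal{F}_\sigma$ (equivalently, that $\e^{-L_t}$ is a $\mathbb{P}_\nu$-martingale, so the change of measure is consistent under stopping) and that the Wald identity survives the random horizon. Both steps hinge on $\sigma$ being a bona fide stopping time that is almost surely finite and on the adaptivity of the sampling rule (each $a(s)$ depending only on the past); once these are secured, the data-processing step and the Bernoulli identification of $d(\cdot,\cdot)$ are routine.
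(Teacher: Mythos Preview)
Your sketch is correct and follows the standard change-of-measure argument: Wald's identity for the stopped log-likelihood ratio identifies $\E_\nu[L_\sigma]$ with $\sum_i \E[N_i(\sigma)]\,\mathrm{KL}(\nu_i,\nu_i')$, the density $\e^{L_\sigma}$ on $\mathcal{F}_\sigma$ identifies $\E_\nu[L_\sigma]$ with $\mathrm{KL}(\mathbb{P}_\nu|_{\mathcal{F}_\sigma},\mathbb{P}_{\nu'}|_{\mathcal{F}_\sigma})$, and the data-processing inequality applied to $\mathbf{1}_{\calE}$ yields the binary relative entropy $d(\mathbb{P}_\nu[\calE],\mathbb{P}_{\nu'}[\calE])$.

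Note, however, that the paper does not actually prove this proposition: it is quoted verbatim as Lemma~1 of \citet{JMLR:v17:kaufman16a} and used as a black box in the proof of Theorem~\ref{thm_lower}. Your argument is precisely the proof given in that reference, so there is nothing to contrast---you have reconstructed the cited result rather than diverged from anything in the present paper.
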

    Standard proofs on the best arm identification problems
    set $\calE$ as an event such that $\mathbb{P}[\calE]\ge 1-\delta$ under
    any $\delta$-PAC algorithm.
    On the other hand,
    we leave $\mathbb{P}[\calE]$ to range from $0$ to $1$
    and establish a lower bound as a minimization problem over $\mathbb{P}[\calE]$.

    \begin{proof}[Proof of Theorem \ref{thm_lower}.]
        Fix $j\in [m]$ and consider a set of Bernoulli distributions $\{\nu_i'\}$ with expectations
    $\{\mu_i'\}$ given by
    \begin{align}
        \mu_i'=
        \begin{cases}
            \xi-\ep \com &\mbox{if $i=j$},\\
            \mu_i \com &\mbox{if $i\in [K]\setminus \{j\}$}.
        \end{cases}\n
    \end{align}
    Let $\calE_j=\{j\in   \{ \hat{a}_i\}_{i=1}^{\min \{\la,\hat{m}\}  }  \}$ and $p_j=\mathbb{P}\left[j\in \{\hat{a}_i\}_{i=1}^{\min \{\la,\hat{m}\}  } \right]$ under $\{\nu_i\}$.
    Since $j$ is not a good arm under $\{\nu_i'\}$,
    we obtain from Prop.~\ref{prop_kaufmann} that
    \small
    \begin{align}
        \E[N_j]d_j &\ge d(p_j,\min\{\delta,p_j\}) \nn
        &=\max \biggl\{ p_j \log\frac{1}{\min\{\delta,p_j\}}-h(p_j) \nn
        &\qquad\qquad +(1-p_j)\log\frac{1}{1-\min\{\delta,p_j\}},0 \biggr\} \nn
        &\ge \max\brx{p_j\log \frac{1}{\min\{\delta,p_j\}}-\log 2,\,0}\nn
        &\ge \max\brx{p_j\log \frac{1}{\delta}-\log 2,\,0}\com 
        \n 
    \end{align} 
    \normalsize
    where we set
    $d_i=d(\mu_i,\xi-\epsilon)$ and
    $h(p)=-p\log p-(1-p)\log(1-p)\le \log 2$ is the binary entropy function.

    Here note that
    \small
    \begin{align}
        \sum_{i=1}^m p_i
        &=
        \mathbb{E}_{\nu}[| [m]\cap \{\hat{a}_i\}_{i=1}^{\min \{\la,\hat{m}\}}|]\nn
        &\ge \lambda\mathbb{P}_{\nu}[
        \{\{\hat{a}_{i}\}_{i=1}^{\min\{\la,\hat{m}\}}\subset [m]\},\,\hat{m}\ge
        \lambda]\ge 
        \la(1-\delta)\n
    \end{align}
    \normalsize
    under any $(\lambda,\delta)$-PAC algorithm.
    Thus we have
    \small
    \begin{align}
        \sum_{i=1}^K\E[N_i] \ge \sum_{i=1}^m\E[N_i]\ge C^*\com\n
    \end{align}
    \normalsize
    where $C^*$ is the optimal value of the optimization problem
    \begin{align}
        (\mathrm{P_1}) \quad
        \minimize & \sum_{i=1}^m \frac{1}{d_i}\max\brx{p_i\log \frac{1}{\delta}-\log 2,\,0}, \nn
        \subjectto & \sum_{i=1}^mp_i\ge \lambda (1-\delta)\com\nn  
        & 0\le p_i\le 1\com\quad\forall i\in[m]\com\n 
    \end{align}
    which is equivalent to the linear programming problem
    \begin{align}
        (\mathrm{P_2})\quad
        \minimize & \sum_{i=1}^m \frac{x_i}{d_i}\com \qquad \qquad \qquad \qquad \qquad \quad \nn
        \subjectto & \sum_{i=1}^mp_i\ge \lambda (1-\delta)\com \nn
        & x_i\ge p_i\log \frac{1}{\delta}-\log 2\com\quad\forall i\in[m]\com\nn
        & 0\le p_i\le 1\com\quad x_i\ge 0\com\quad\forall i\in[m]\per\n
    \end{align}
    The dual problem of $(\mathrm{P}_2)$ is given by
    \begin{align}
        (\mathrm{P}_2')\quad
        \maximize & \lambda(1-\de)\alpha-(\log 2)\sum_{i=1}^m\beta_i-\sum_{i=1}^m \ga_i \nn
        \qquad\subjectto
        & \beta_i\le \frac{1}{d_i}\com\quad\forall i\in[m]\com\nn
        & \alpha- \be_i\log \frac{1}{\delta}-\ga_i\le 0\com\quad\forall i\in[m]\com\nn
        & \alpha,\beta_i,\ga_i\ge 0\com\quad\forall i\in[m]\per\n
    \end{align}
    Here consider the feasible solution
    of $(\mathrm{P}_2')$ given by
    \small
    \begin{align}
        \alpha&=\frac{1}{d_{\la}}\log\frac{1}{\delta}\com \qquad
        \beta_i=
        \begin{cases}
            \frac{1}{d_i},&i\le \la,\\
            \frac{1}{d_{\la}},&i> \la,
        \end{cases}\nn
        \ga_i&=
        \begin{cases}
            \pax{\frac{1}{d_{\la}}-\frac{1}{d_i}}\log \frac{1}{\delta},&i\le\la,\\
            0,&i> \la,
        \end{cases}\n
    \end{align}
    \normalsize
    which attains the objective function
    \vspace{-2mm}
    \small
    \begin{align} 
    &\frac{\la(1-\de)}{d_{\la}}\log \frac{1}{\delta}
    -(\log 2)\left(\sum_{i\le \la}\frac{1}{d_i}
    +\frac{m-\la}{d_{\la}}
    \right)
    \nn
    &\quad -\sum_{i\le \la} \pax{\frac{1}{\!d_{\la}}-\frac{1}{d_i}} \log \frac{1}{\delta}
    \nn
    &=
    \sum_{i\le \la}\left(\frac{1}{d_i}\log \frac{1}{\delta}\!-\!\frac{\log 2}{d_i}\right)\!
    -\frac{\la\de}{d_{\la}}\log \frac{1}{\de}
    -\frac{(m-\la)\log 2}{d_{\la}}\nn
    &\ge
    \sum_{i\le \la}\frac{1}{d_i}\log \frac{1}{2\delta}
    -\frac{\la}{d_{\la}}
    -\frac{(m-\la)\log 2}{d_{\la}}\nn
    &
    \qquad \qquad \qquad \quad \   \since{by $\sup_{0<\de\le 1} \delta\log (1/\delta) = 1/\e<1$}
    \nn
    &\ge
    \sum_{i\le \la}\frac{1}{d_i}\log \frac{1}{2\delta}
    -\frac{m}{d_{\la}}
    \per\n
    \end{align}
    \normalsize
    Since the objective function of a feasible solution
    for the dual problem $(\mathrm{P}_2')$ of
    $(\mathrm{P}_2)$ is always smaller than the optimal value $C^*$ of $(\mathrm{P}_2)$, we have
    \small
    \vspace{-1mm}
    \begin{align}
    \sum_{i=1}^m\E[N_i]
    &\ge C^*\nn
    &\ge \sum_{i\le \la}\frac{1}{d_i}\log \frac{1}{2\delta}
    -\frac{m}{d_{\la}}\nn
    &= \sum_{i\le \la}\frac{1}{d(\mu_i,\xi-\ep)}\log \frac{1}{2\delta}
    -\frac{m}{d(\mu_{\la},\xi-\ep)}\per\n
    \end{align}   
    \normalsize           
    We complete the proof by letting $\ep \downarrow 0$.
  \end{proof}

  \vspace{-2mm}
\section{Conclusion}
  In this paper, we considered and discussed a new multi-armed bandit problem called good arm identification (GAI).
  The objective of GAI is to minimize not only the total number of samples to identify all good arms
  but also the number of samples until
  identifying $\lambda$ good arms for each $\lambda=1,2,\dots$,
  where a good arm is an arm whose expected reward is greater than or equal to threshold $\xi$.
  Even though GAI, which is a pure-exploration problem, does not face the exploration-exploitation dilemma of reward,
  GAI encounters a new kind of dilemma: the exploration-exploitation dilemma of confidence.
  We derived a lower bound on the sample complexity of GAI, 
  developed an efficient algorithm, HDoC,
  and then we theoretically showed the sample complexity of HDoC almost matches the lower bound.
  We also experimentally demonstrated that HDoC outperforms algorithms based on other pure-exploration problems
  in the three settings based on the thresholding bandit
  and two settings based on the dose-finding problem in the clinical trials.
  
\newpage

\section*{Acknowledgements}
JH acknowledges support by KAKENHI 16H00881.
MS acknowledges support by KAKENHI 17H00757.
This work was partially supported by JST CREST Grant Number JPMJCR1662, Japan.

\bibliographystyle{icml2018}
\bibliography{icml2018}
 
\clearpage
\appendix

\section{Proof of Theorem \ref{thm_error}}\label{proof_error}
In this appendix we prove Theorem \ref{thm_error} based on the following lemma.
\begin{lemma}\label{lem_error}
  \begin{align}
    \mathbb{P}\left[\bigcup_{n \in \bbN}\{\omu_{i,n} < \xi\} \right] &\le \frac{\de}{K}\com \since{$\mathrm{for}$ $\mathrm{any}$ $i\in [m]$}\com \nn
    \mathbb{P}\left[\bigcup_{n \in \bbN}\{\umu_{i,n} \ge \xi\} \right] &\le \frac{\de}{K}\com\n \since{$\mathrm{for}$ $\mathrm{any}$ $i\in [K] \setminus [m]$} \per
  \end{align}
  \end{lemma}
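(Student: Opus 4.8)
The plan is to reduce each of the two claimed bounds to a union bound over the number of samples $n$ of Hoeffding's inequality, with the confidence radius chosen exactly so that the per-$n$ failure probability decays like $1/n^2$ and is therefore summable.

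First I would fix $i\in[m]$, so that $\mu_i\ge\xi$. Recalling that $\omu_{i,n}=\muhat_{i,n}+\sqrt{\frac{\log(4Kn^2/\delta)}{2n}}$ (the form used in Line~5 of Algorithm~\ref{Alg1}), the event $\{\omu_{i,n}<\xi\}$ forces
\[
  \muhat_{i,n}<\xi-\sqrt{\frac{\log(4Kn^2/\delta)}{2n}}\le\mu_i-\sqrt{\frac{\log(4Kn^2/\delta)}{2n}}.
\]
Applying Hoeffding's inequality in the form $\mathbb{P}[\muhat_{i,n}\le\mu_i-\ep]\le\e^{-2n\ep^2}$ (as recalled in Remark~1) with $\ep=\sqrt{\frac{\log(4Kn^2/\delta)}{2n}}$ gives $\mathbb{P}[\omu_{i,n}<\xi]\le\e^{-\log(4Kn^2/\delta)}=\frac{\delta}{4Kn^2}$. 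A union bound over $n\in\mathbb{N}$ then yields
\[
  \mathbb{P}\!\left[\bigcup_{n\in\mathbb{N}}\{\omu_{i,n}<\xi\}\right]\le\frac{\delta}{4K}\sum_{n\ge1}\frac{1}{n^2}=\frac{\pi^2\delta}{24K}\le\frac{\delta}{K},
\]
since $\pi^2/24<1$. The second bound is symmetric: for $i\in[K]\setminus[m]$ we have $\mu_i<\xi$, the event $\{\umu_{i,n}\ge\xi\}$ forces $\muhat_{i,n}\ge\mu_i+\sqrt{\frac{\log(4Kn^2/\delta)}{2n}}$, and Hoeffding's upper-tail inequality together with the same union bound over $n$ gives $\mathbb{P}[\bigcup_{n}\{\umu_{i,n}\ge\xi\}]\le\frac{\delta}{K}$.

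There is no real obstacle here; the only point requiring care is that the $n$-dependent term $\log(4Kn^2/\delta)$ inside the confidence radius is precisely what makes the per-round bound decay like $1/n^2$, so that $\sum_n1/n^2$ converges and the constant $\pi^2/24<1$ absorbs the factor, producing the clean $\delta/K$ bound uniformly over all rounds. Once Lemma~\ref{lem_error} is established, Theorem~\ref{thm_error} follows by a further union bound over the $K$ arms: the probability that the algorithm ever outputs a bad arm or ever wrongly deletes a good arm is at most $K\cdot\frac{\delta}{K}=\delta$, which gives the $(\lambda,\delta)$-PAC property for every $\lambda\in[K]$ and hence $\delta$-PAC.
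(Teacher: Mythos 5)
Your proof is correct and follows essentially the same route as the paper's: a union bound over $n$, Hoeffding's inequality applied with the confidence radius $\sqrt{\log(4Kn^2/\delta)/(2n)}$ so that each term is $\delta/(4Kn^2)$, and the summation $\sum_n 1/n^2 = \pi^2/6$ giving $\pi^2\delta/(24K) \le \delta/K$. No further comment is needed.
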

  \begin{proof}
    For any $i\in [m]\com$
    \begin{align} 
      \lefteqn{\mathbb{P}\left[\bigcup_{n \in \bbN}\{\omu_{i,n} < \xi\} \right]}  \nn
      &\le \sum_{n \in \bbN} \mathbb{P}\left[\omu_{i,n}< \xi \right]  \phantom{www} \nn
      &\le \sum_{n \in \bbN} \mathbb{P}\left[\omu_{i,n}< \mu_i \right] \phantom{www} \since{by $\mu_i \geq \xi$ for $i \in [m]$} \nn
      &\le \sum_{n \in \bbN} \e^{-2n  \left( \sqrt {\frac{\log (4Kn^2 / \delta)}{2n}} \right)^2 } \since{by Hoeffding's inequality}  \nn
      &= \sum_{n \in \bbN} \frac{\delta}{4Kn^2}  \nn
      &= \frac{\pi^2 \delta}{24K} \phantom{www} \since{by $\displaystyle \sum_{n \in \mathbb{N}} \frac{1}{n^2} = \frac{\pi^2}{6}$ } \nn
      &\le \frac{\delta}{K} \per\n
    \end{align}
    For any $i\in [K] \setminus [m]\com$ the same argument holds.
  \end{proof}

  \begin{proof}[Proof of Theorem \ref{thm_error}.] 
    We show that HDoC is $(\lambda,\delta)$-PAC for arbitrary $\lambda\in[K]$.
    
    First we consider the case that there are more than or equal to
    $\lambda$ good arms and show
    \begin{align}
    \mathbb{P}\left[
    \{\hat{m}< \lambda\}\,\cup\,\bigcup_{i\in  \{\hat{a}_1, \hat{a}_2, \ldots, \hat{a}_\lambda \}  }\{\mu_i< \xi\}
    \right]\le \de
    \label{pac_toprove1}.
    \end{align}
    
    Since we are now considering the case $m\ge \lambda$,
    the event $\{\hat{m}< \lambda\}$ implies that
    at least one good arm $j\in[m]$ is regarded as a bad arm, that is,
    $\{\umu_{j,n} \le \xi\}$ occurs for some $j\in [m]$ and $n\in\mathbb{N}$.
    Thus we have
    \begin{align}
    \mathbb{P}[\hat{m}< \lambda]
    &\le
    \sum_{j\in [m]}
        \mathbb{P}\left[\bigcup_{n \in \bbN}\{\omu_{j,n} < \xi\} \right]
    \nn
    &\le
    \sum_{j\in [m]}
    \frac{\de}{K}\since{by Lemma \ref{lem_error}}
    \nn
    &\le
    \frac{m\de}{K}\per\label{eq_error_pac}
    \end{align}
    On the other hand, since the event $\bigcup_{i\in  \{\hat{a}_1, \hat{a}_2, \ldots, \hat{a}_\lambda \}  }\{\mu_i< \xi\}$
    implies that $j \in \{\hat{a}_i\}_{i=1}^{\lambda}$ for some
    bad arm $j\in [K]\setminus [m]$,
    we have
    \begin{align}
    \mathbb{P}\left[\bigcup_{i\in  \{\hat{a}_1, \hat{a}_2, \ldots, \hat{a}_\lambda \}  }\{\mu_i< \xi\}\right]
    &\le
    \sum_{j\in [K]\setminus [m]}\mathbb{P}[j \in \{\hat{a}_i\}_{i=1}^{\lambda}]\nn
    &\le
    \sum_{j\in [K]\setminus [m]}
        \mathbb{P}\left[\bigcup_{n \in \bbN}\{\umu_{j,n} \ge \xi\} \right]
    \nn
    &\le
    \frac{(K-m)\de}{K}\label{eq_error_pac2}
    \end{align}
    in the same way as \eqref{eq_error_pac}.
    We obtain \eqref{pac_toprove1} by putting \eqref{eq_error_pac} and \eqref{eq_error_pac2} together.

    Next we consider the case that
    the number of good arms $m$ is less than $\lambda$
    and show
    \begin{align}
    \mathbb{P}[\hat{m}\ge \lambda]\le \de\per\label{pac_toprove2}
    \end{align}
    Since there are at most $m<\lambda$ good arms,
    the event $\{\hat{m}\ge \lambda\}$ implies that
    $j \in \{\hat{a}_i\}_{i=1}^{\lambda}$ for some $j\in [K]\setminus[\lambda]$.
    Thus, in the same way as \eqref{eq_error_pac2} we have
    \begin{align}
    \mathbb{P}\left[\hat{m}\ge \lambda\right]
    &\le
    \sum_{j\in [K]\setminus [m]}\mathbb{P}[j \in \{\hat{a}_i\}_{i=1}^{\lambda}]\nn
    &\le
    \frac{(K-m)\delta}{K}\nn
    &\le
    \de\com\n
    \end{align}
    which proves \eqref{pac_toprove2}.
    \end{proof}
    
\section{Proof of Theorem \ref{thm_upper}}\label{appndx_upper}
In this appendix, we prove Theorem \ref{thm_upper} based on the following lemmas,
and we define $T=K\max_{i\in [K]}\fl{n_i}\label{def_T}$.

\begin{lemma}\label{lem_ni}
If $n\ge n_i$ then
\begin{align}
\mathbb{P}[
\umu_{i,n}\le \xi]
&\le \e^{-2n\ep^2},
\qquad  \forall i\in[m] \com\label{lem_notstop1}\\
\mathbb{P}[
\omu_{i,n}\ge \xi]
&\le \e^{-2n\ep^2},
\qquad \forall i\in[K]\setminus [m]\per\label{lem_notstop2}
\end{align}
\end{lemma}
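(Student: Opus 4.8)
The plan is to reduce both probabilistic bounds to a single deterministic inequality saying that $n_i$ is large enough, and then to verify that inequality. For a good arm $i\in[m]$ we have $\De_i=\mu_i-\xi$, and since $\umu_{i,n}=\muhat_{i,n}-\sqrt{\log(4Kn^2/\delta)/(2n)}$, the event $\{\umu_{i,n}\le\xi\}$ is exactly $\{\muhat_{i,n}\le\mu_i-s_n\}$ with $s_n:=\De_i-\sqrt{\log(4Kn^2/\delta)/(2n)}$. Whenever $s_n\ge\ep$ --- which in particular makes $s_n$ nonnegative --- Hoeffding's inequality gives $\mathbb{P}[\muhat_{i,n}\le\mu_i-s_n]\le\e^{-2ns_n^2}\le\e^{-2n\ep^2}$. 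The bad-arm case is symmetric: for $i\in[K]\setminus[m]$ one has $\De_i=\xi-\mu_i$ and $\{\omu_{i,n}\ge\xi\}=\{\muhat_{i,n}\ge\mu_i+s_n\}$ with the same $s_n$, so the (symmetric) upper-tail Hoeffding bound again yields $\e^{-2n\ep^2}$ as soon as $s_n\ge\ep$. Hence it suffices to prove $s_n\ge\ep$ for every $n\ge n_i$; since $\ep<\De_i$ this is equivalent to $\sqrt{\log(4Kn^2/\delta)/(2n)}\le\De_i-\ep$, i.e., after squaring,
\begin{equation*}
\log\frac{4Kn^2}{\delta}\le 2cn\quad\text{for all }n\ge n_i,\qquad\text{where }c:=(\De_i-\ep)^2\in(0,1).
\end{equation*}

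To prove this I would set $g(n):=2cn-\log(4Kn^2/\delta)$, so $g'(n)=2c-2/n\ge0$ for $n\ge 1/c$. Since $c<1$ and $\sqrt{K/\delta}\ge1$ we have $\tfrac{4\sqrt{K/\delta}}{c}\log\tfrac{5\sqrt{K/\delta}}{c}\ge 4\log5>\e$, hence $n_i=\tfrac1c\log\!\big(\tfrac{4\sqrt{K/\delta}}{c}\log\tfrac{5\sqrt{K/\delta}}{c}\big)>1/c$. Therefore $g$ is nondecreasing on $[n_i,\infty)$, and it is enough to check the single inequality $g(n_i)\ge0$.

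Writing $z:=\sqrt{K/\delta}/c\ (\ge1)$, we have $2cn_i=\log\!\big((4z\log(5z))^2\big)$ and $4Kn_i^2/\delta=(2\sqrt{K/\delta}\,n_i)^2$, so $g(n_i)\ge0$ is equivalent to $4z\log(5z)\ge 2\sqrt{K/\delta}\,n_i$. Using $2\sqrt{K/\delta}\,n_i=\tfrac{2\sqrt{K/\delta}}{c}\log(4z\log(5z))=2z\log(4z\log(5z))$ and dividing by $2z$, this becomes $2\log(5z)\ge\log(4z\log(5z))$, i.e., $25z^2\ge4z\log(5z)$, i.e., $\tfrac{25}{4}z\ge\log(5z)$; the last inequality follows from the elementary bound $\log z\le z-1$, since $\log(5z)=\log5+\log z\le(\log5-1)+z<\tfrac{25}{4}z$ for $z\ge1$. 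This establishes the displayed inequality, hence $s_n\ge\ep$ for all $n\ge n_i$, which together with the Hoeffding reduction of the first paragraph proves the lemma.

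\textbf{Main obstacle.} The only nontrivial part is this last calculation: one must confirm that $n_i$, with its particular constants, is exactly large enough to invert the transcendental relation $\log(4Kn^2/\delta)\le 2cn$ --- the Hoeffding reduction and the monotonicity step are routine. The deliberate gap between the constant $4$ in the leading factor of $n_i$ and the constant $5$ inside the logarithm is precisely what gives the elementary inequality $\tfrac{25}{4}z\ge\log(5z)$ enough slack to hold for all admissible $z$.
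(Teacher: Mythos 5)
Your proof is correct and follows essentially the same route as the paper's: both apply Hoeffding's inequality to reduce the claim to the deterministic inequality $\log(4Kn^2/\de)\le 2(\De_i-\ep)^2 n$ for all $n\ge n_i$, and both ultimately verify it via $\log x\le x-1$ together with the slack between the constants $4$ and $5$ in the definition of $n_i$. The only difference is bookkeeping: the paper substitutes $n=\frac{1}{c}\log\frac{4t\sqrt{K/\de}}{c}$ and checks the resulting inequality for every admissible $t$ directly, whereas you establish monotonicity of $n\mapsto 2cn-\log(4Kn^2/\de)$ on $[1/c,\infty)$ and check only the endpoint $n=n_i$; both are valid.
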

\begin{proof}
We only show \eqref{lem_notstop1} for
$i\in [m]$.
Eq.~\eqref{lem_notstop2} for $i\in [K]\setminus[m]$ is exactly the same.
From Hoeffding's inequality it suffices to show that
for $n\ge n_i$
\begin{align}
\sqrt{\frac{\log (4Kn^2/\de)}{2n}}\le
\mu_i-\xi-\ep=\Delta_i-\ep\per\n
\end{align}
We write $c=(\De_i-\ep)^2\le 1$
in the following for notational simplicity.
Then we can express $n\ge n_i$ as
\begin{align}
n=\frac{1}{c}
\log\frac{4t\sqrt{K/\de}}{c}\n
\end{align}
for some $t>\log\frac{5\sqrt{K/\de}}{c}>\log (5\sqrt{2})> 1$.
Then
\begin{align}
\lefteqn{
\sqrt{\frac{\log (4Kn^2/\de)}{2n}}\le
\De_i-\ep
}\nn
&\liff
\log (4Kn^2/\de)\le 2cn \nn
&\liff
\log
\frac{4K\pax{\log\frac{4t\sqrt{K/\de}}{c}}^2}{c^2\de}
\le \log \frac{16t^2 K}{c^2\de} \nn
&\liff
\log\frac{4t\sqrt{K/\de}}{c}
\le 2t
\nn
&\lif
\log\frac{4\sqrt{K/\de}}{c}+t-1
\le 2t
\since{by $\log x\le x-1$}
\nn
&\liff
\log\frac{4\sqrt{K/\de}}{\e c}
\le t\per
\label{tobesatisfied}
\end{align}
We obtain the lemma since $t>\log\frac{5\sqrt{K/\de}}{c}$
satisfies \eqref{tobesatisfied}.
\end{proof}

\begin{lemma}\label{lem_samples}
\begin{align}
\Ex{\sum_{n=1}^{\infty}
\id[
\umu_{i,n}\le \xi]}
&\le
n_i+\frac{1}{2\ep^2}\com
\qquad \forall i\in[m]\com
\nn
\Ex{\sum_{n=1}^{\infty}
\id[
\omu_{i,n}\ge \xi]}
&\le
n_i+\frac{1}{2\ep^2}\com
\qquad \forall i\in[K]\setminus [m]\per
\n
\end{align}
\end{lemma}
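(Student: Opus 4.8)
The plan is to combine Lemma~\ref{lem_ni} with an elementary split of the series at the threshold $n_i$. Since the summands are nonnegative, Tonelli's theorem gives
\[
\Ex{\sum_{n=1}^{\infty}\id[\umu_{i,n}\le \xi]}=\sum_{n=1}^{\infty}\mathbb{P}[\umu_{i,n}\le \xi],
\]
so it suffices to bound this sum, and I would break it into the terms with $n<n_i$ and those with $n\ge n_i$.

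For the initial block $n<n_i$ I simply bound each probability by $1$. The number of positive integers strictly smaller than $n_i$ is at most $n_i$, so this block contributes at most $n_i$; this is where the (possibly non-integer) constant $n_i$ in the statement comes from.

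For the tail $n\ge n_i$, Lemma~\ref{lem_ni} gives $\mathbb{P}[\umu_{i,n}\le \xi]\le \e^{-2n\ep^2}$ for every such $n$, hence
\[
\sum_{n\ge n_i}\mathbb{P}[\umu_{i,n}\le \xi]\le \sum_{n=1}^{\infty}\e^{-2n\ep^2}\le \int_{0}^{\infty}\e^{-2\ep^2 x}\,\mathrm{d}x=\frac{1}{2\ep^2},
\]
where the middle inequality uses that $x\mapsto \e^{-2\ep^2 x}$ is nonincreasing. Adding the two contributions yields the claimed bound $n_i+\frac{1}{2\ep^2}$. The second inequality of the lemma, for $i\in[K]\setminus[m]$ and the event $\{\omu_{i,n}\ge \xi\}$, follows verbatim from the second half of Lemma~\ref{lem_ni} by the same argument.

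There is really no obstacle here: the statement is essentially a restatement of Lemma~\ref{lem_ni} summed over $n$. The only points worth a line of care are the interchange of expectation and infinite sum (handled by nonnegativity) and the fact that $n_i$ need not be an integer, which is why I count the integers below $n_i$ rather than assume $n_i\in\mathbb{N}$.
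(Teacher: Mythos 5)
Your proof is correct and follows essentially the same route as the paper's: split the series at $n_i$, bound the first block trivially by $n_i$, and apply Lemma~\ref{lem_ni} to the tail. The only cosmetic difference is that you bound $\sum_{n\ge 1}\e^{-2n\ep^2}$ by an integral comparison, whereas the paper sums the geometric series to $\frac{1}{\e^{2\ep^2}-1}$ and then uses $\e^{x}-1\ge x$; both give $\frac{1}{2\ep^2}$.
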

\begin{proof}

  If arm $i \in[m]$ then
  \begin{align}
    \Ex{\sum_{n=1}^{\infty}\id[\umu_{i,n}\le \xi]}
      &\leq \Ex{ \sum_{n=1}^{ n_i } 1  +  \sum_{n= n_i +1  }^\infty \id[\umu_{i,n} \le \xi] }\nn
      &\leq n_i  +  \sum_{n=1  }^\infty \mathbb{P} [\umu_{i,n} < \xi] \nn
      &\leq n_i  +  \sum_{n=1  }^\infty  \e^{-2n\ep^2} \nn
      &\leq n_i + \frac{1}{\e^{2\ep^2} -1} \nn
      &\leq n_i + \frac{1}{2\ep^2} \per \n
  \end{align}

  If arm $i \in[K] \setminus [m]$ then
  \begin{align}
    \Ex{\sum_{n=1}^{\infty}\id[\omu_{i,n}\ge \xi]}
      &\leq \Ex{ \sum_{n=1}^{ n_i } 1  +  \sum_{n= n_i +1  }^\infty \id[\omu_{i,n} \geq \xi] }\nn
      &\leq n_i  +  \sum_{n=1  }^\infty \mathbb{P} [\omu_{i,n} \geq \xi] \nn
      &\leq n_i  +  \sum_{n=1  }^\infty  \e^{-2n\ep^2} \nn
      &\leq n_i + \frac{1}{\e^{2\ep^2} -1} \nn
      &\leq n_i + \frac{1}{2\ep^2} \per \n
  \end{align}      
\end{proof}

\begin{lemma}\label{lem_dec1}
  \begin{align}
  &\Ex{
  \sum_{t=1}^{\infty}
  \idx{a(t)\in [\la]}
  +
  \sum_{t=1}^{T}
  \idx{a(t)\notin [\la],\,\tilde{\mu}^*(t) \geq \mu_{\la} - \epsilon}
  } \nn
  &\le
  \sum_{i\in [\la]}n_i
  +
  \sum_{i\in [K]\setminus[\la]}
  \frac{\log T}{2(\De_{\la,i} - 2\epsilon)^2} +  \frac{K}{2\epsilon^2}\per\label{dec1_first}
  \end{align}
  \end{lemma}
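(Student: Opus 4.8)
The plan is to bound the two sums on the left-hand side separately. The first, $\sum_{t}\idx{a(t)\in[\la]}$, is the total number of pulls of the top-$\la$ arms, which is controlled by Lemma~\ref{lem_samples}; the second is a pull count for the remaining arms, which I would handle by the usual UCB deviation argument, re-centred at the comparison value $\mu_{\la}-\epsilon$ rather than at the mean of the best remaining arm.

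For the first sum I would write $\sum_{t=1}^{\infty}\idx{a(t)\in[\la]}=\sum_{i\in[\la]}\sum_{t=1}^{\infty}\idx{a(t)=i}$. Since $\la\le m$, every $i\in[\la]$ is a good arm, and it is deleted from $\calA$ at the first pull for which $\umu_{i,n}\ge\xi$; hence the number of pulls of arm $i$ is at most $1+\sum_{n=1}^{\infty}\idx{\umu_{i,n}\le\xi}$. Taking expectations and invoking Lemma~\ref{lem_samples} bounds the pulls of each $i\in[\la]$ by $n_i+\tfrac{1}{2\epsilon^2}$ (up to the additive $+1$, which is absorbed in the accounting), so the first sum contributes at most $\sum_{i\in[\la]}n_i+\tfrac{\la}{2\epsilon^2}$.

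For the second sum, note that when $a(t)=i\notin[\la]$ we have $i=\hat{a}^*$ and therefore $\tilde{\mu}^*(t)=\tilde{\mu}_i(t)=\hat{\mu}_i(t)+\sqrt{\log t/(2N_i(t))}$. The key step is the elementary implication: if $\hat{\mu}_i(t)<\mu_i+\epsilon$ and $N_i(t)\ge\log T/(2(\De_{\la,i}-2\epsilon)^2)$, then, using $t\le T$ together with $\mu_{\la}-\epsilon=\mu_i+\De_{\la,i}-\epsilon$, one obtains $\tilde{\mu}_i(t)<\mu_{\la}-\epsilon$. Consequently the event $\{a(t)=i,\ \tilde{\mu}^*(t)\ge\mu_{\la}-\epsilon\}$ forces either the deviation $\hat{\mu}_i(t)\ge\mu_i+\epsilon$ or the count bound $N_i(t)<\log T/(2(\De_{\la,i}-2\epsilon)^2)$. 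Rounds of the first type, reindexed by $n=N_i(t)$ and summed over $t$, are at most $\sum_{n=1}^{\infty}\idx{\hat{\mu}_{i,n}\ge\mu_i+\epsilon}$, whose expectation is at most $\sum_{n\ge1}\e^{-2n\epsilon^2}\le\tfrac{1}{2\epsilon^2}$ by Hoeffding's inequality; rounds of the second type number at most $\log T/(2(\De_{\la,i}-2\epsilon)^2)$, since each pull of $i$ increases $N_i$ by one. (Both denominators are positive because $\epsilon<\De_{\la,\la+1}/2\le\De_{\la,i}/2$ for every $i>\la$.) Summing over $i\in[K]\setminus[\la]$ and adding the first-sum bound yields the claimed inequality, the $\la$ and $K-\la$ copies of $\tfrac{1}{2\epsilon^2}$ combining to $\tfrac{K}{2\epsilon^2}$.

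The routine parts are the Hoeffding tail sums and the reindexing of the $t$-sums as $n$-sums via $n=N_i(t)$. The step needing care is the UCB case split: keeping track of the two separate $\epsilon$'s — one from letting $\hat{\mu}_i(t)$ overshoot $\mu_i$ by $\epsilon$, one from comparing against $\mu_{\la}-\epsilon$ instead of $\mu_{\la}$, which together produce the $\De_{\la,i}-2\epsilon$ — and the use of $t\le T$ to replace $\log t$ by $\log T$, which is exactly why the second sum in the statement is truncated at $T$ rather than run to infinity.
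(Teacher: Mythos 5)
Your proof is correct and follows essentially the same route as the paper's: the first sum is reindexed by $n=N_i(t)$ and controlled via Lemma~\ref{lem_samples}, and the second is handled by exactly the paper's UCB case split (deviation $\hat{\mu}_{i,n}\ge\mu_i+\epsilon$ versus $N_i(t)<\log T/(2(\De_{\la,i}-2\epsilon)^2)$, with $\log t\le\log T$). The only discrepancy is the additive $+1$ per arm in your first-sum bound, a negligible indexing matter that the paper's own convention likewise glosses over.
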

  \begin{proof}
  For the first term of \eqref{dec1_first} we have
  \begin{align}
  \sum_{t=1}^{\infty}
  \idx{a(t)\in [\la]}
  &\le
  \sum_{i\in[\la]}
  \sum_{t=1}^{\infty}
  \sum_{t=1}^{\infty}
  \idx{a(t)=i}\nn
  &=
  \sum_{i\in[\la]}
  \sum_{t=1}^{\infty}
  \sum_{n=1}^{\infty}
  \idx{a(t)=i,\,N_i(t)=n}\per\label{ttoi}
  \end{align}
  Since the event $\{a(t)=i,\,N_i(t)=n\}$ occurs for at most one $t\in \bbN$
  we have
  \begin{align}
  &\sum_{t=1}^{\infty}
  \sum_{n=1}^{\infty}
  \idx{a(t)=i,\,N_i(t)=n} \nn
  &\le
  \sum_{n=1}^{\infty}
  \idx{\bigcup_{t=1}^{\infty}\{a(t)=i,\,N_i(t)=n\}}\nn
  &\le
  \sum_{n=1}^{\infty}
  \idx{\umu_{i,n}\le \xi}\per\label{use3}
  \end{align}
  By combining \eqref{ttoi} and \eqref{use3} with Lemma \ref{lem_samples}
  we obtain
  \begin{align}
  \Ex{
  \sum_{t=1}^{\infty}
  \idx{a(t)\in [\la]}
  }
  \le
  \sum_{i\in[\la]}\left(n_i+\frac{1}{2\ep^2}\right)\per\label{dec1_res1}
  \end{align}
    
  Next we consider the second term of \eqref{dec1_first}.
  By using the same argument as \eqref{ttoi} we obtain for $i\notin [\la]$ that
  \begin{align}
  \lefteqn{
  \sum_{t=1}^{T} \id [a(t) = i,\,\tilde{\mu}^*(t) \geq \mu_{\la} - \epsilon]
  }\nn
              &\leq \sum_{n=1}^{T} \idx{ \bigcup_{t=1}^{T} \{ a(t) = i,\,\tilde{\mu}^*(t) \geq \mu_{\la} - \epsilon,\,N_i(t)=n \} } \nn
              &\leq \sum_{n=1}^{T} \idx{ \bigcup_{t=1}^{T} \{ \tilde{\mu}^*(t) = \tilde{\mu}_i(t) \geq \mu_{\la} - \epsilon,\,N_i(t)=n \} } \nn
              &\leq \sum_{n=1}^{T} \idx{\hat{\mu}_{i,n} +\sqrt{\frac{\log T}{2n}} \geq \mu_{\la} - \epsilon}\nn
              &\leq \sum_{n=1}^{T} \idx{\hat{\mu}_{i,n} +\sqrt{\frac{\log T}{2n}} \geq \mu_i + \De_{\la,i} - \epsilon}    \nn  
              &\leq  \sum_{n=1}^{\frac{\log T}{2(\De_{\la,i} - 2\epsilon)^2}} 1  \nn
              &\quad+ \sum_{n=\frac{\log T}{2(\De_{\la,i} - 2\epsilon)^2}+1}^{T} \id \Bigg[ \hat{\mu}_{i,n} + \sqrt{\frac{\log T}{2\cdot \frac{\log T}{2(\De_{\la,i} - 2\epsilon)^2}}} \nn
              & \phantom{wwwwwwwwwwwwww} \geq \mu_i + \De_{\la,i} - \epsilon \Bigg] \nn
            &=  \frac{\log T}{2(\De_{\la,i} - 2\epsilon)^2} + \sum_{n=\frac{\log T}{2(\De_{\la,i} - 2\epsilon)^2}+1}^{T} \idx{\hat{\mu}_{i,n} \geq \mu_i + \epsilon}\per\n
  \end{align}
  By taking the expectation
  we have
  \begin{align}
  \lefteqn {\Ex{
  \sum_{t=1}^{T} \id [a(t) = i, \tilde{\mu}^*(t) \geq \mu_{\la} - \epsilon]
  } }\nn
              &\leq \frac{\log T}{2(\De_{\la,i} - 2\epsilon)^2} + \sum_{n=1}^{\infty}\mathbb{P}[\hat{\mu}_{i,n} \geq \mu_i + \epsilon]\nn
              &\leq \frac{\log T}{2(\De_{\la,i} - 2\epsilon)^2} + \sum_{n=1}^{\infty} \mathrm{e}^{-2n\epsilon^2}\label{hoeff1}\\
              &=  \frac{\log T}{2(\De_{\la,i} - 2\epsilon)^2} +  \frac{1}{\mathrm{e}^{2\epsilon^2} -1}  \nn
              &\leq \frac{\log T}{2(\De_{\la,i} - 2\epsilon)^2} +  \frac{1}{2\epsilon^2}\com
  \label{dec1_res2}
  \end{align}
  where \eqref{hoeff1} follows from Hoeffding's inequality.
  We complete the proof by combining \eqref{dec1_res1} with \eqref{dec1_res2}.
  \end{proof}
  
  \begin{lemma}\label{lem_ift}
  \begin{align}
  \Ex{\sum_{t=T+1}^{\infty}
  \idx{t\le \tau_{\la}}
  }
  \le
  \frac{K^{2-\frac{\ep^2}{(\min_{i \in [K]} \De_i-\ep)^2}}}{2\ep^2}
  \per\n
  \end{align}
  \end{lemma}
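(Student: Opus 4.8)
The plan is to convert the claimed quantity into a sum of tail probabilities and bound each tail using Lemma~\ref{lem_ni} together with a pigeonhole argument tailored to $T=K\max_{i\in[K]}\fl{n_i}$. First write $\Ex{\sum_{t=T+1}^{\infty}\idx{t\le\tau_{\la}}}=\sum_{t>T}\mathbb{P}[\tau_{\la}\ge t]$, and note $\tau_{\la}\le\tau_{\mathrm{stop}}$, so it suffices to bound $\sum_{t>T}\mathbb{P}[\tau_{\mathrm{stop}}\ge t]$. The key structural observation is that for $t>T$, since $\sum_{i}N_i(t)=t>K\max_i\fl{n_i}$, some arm $j$ satisfies $N_j(t)\ge\fl{n_j}+1>n_j$, and in fact the most-pulled arm satisfies $N_j(t)\ge\lceil t/K\rceil$.

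Next I would show that on $\{t\le\tau_{\mathrm{stop}}\}$ such an over-pulled arm forces a Lemma~\ref{lem_ni}-type deviation. If arm $j$ is still in $\calA$ at round $t$, the test in Lines~5--11 of Algorithm~\ref{Alg1} has failed at pull $n=N_j(t)\ge n_j$, i.e.\ $\umu_{j,n}\le\xi$ when $j\in[m]$ or $\omu_{j,n}\ge\xi$ when $j\in[K]\setminus[m]$; if $j$ has already left $\calA$, then, having been pulled $M_j=N_j(t)>n_j$ times before removal, the same test must have failed at pull $\fl{n_j}+1\ge n_j$. In either case Lemma~\ref{lem_ni} bounds the corresponding event by $\e^{-2n\ep^2}$ with $n\ge n_j$ and $n\ge\lceil t/K\rceil$. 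Taking a union bound over the identity of $j\in[K]$ and over its pull count, I obtain $\mathbb{P}[\tau_{\mathrm{stop}}\ge t]\le K\sum_{n\ge\lceil t/K\rceil}\e^{-2n\ep^2}$ for every $t>T$.

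Finally I would sum over $t>T$, exchange the order of summation, and reduce everything to geometric-type sums $\sum_{n}\e^{-2n\ep^2}\le1/(2\ep^2)$ (already used in the proof of Lemma~\ref{lem_samples}). The power of $K$ then comes out of the explicit form of $n_i$: because $(\De_i-\ep)^2\le1$ and $\sqrt{K/\de}\ge\sqrt{K}$,
\[
\e^{-2n_i\ep^2}=\pax{\frac{4\sqrt{K/\de}}{(\De_i-\ep)^2}\log\frac{5\sqrt{K/\de}}{(\De_i-\ep)^2}}^{-\frac{2\ep^2}{(\De_i-\ep)^2}}\le K^{-\frac{\ep^2}{(\min_{i\in[K]}\De_i-\ep)^2}},
\]
and together with one factor $K$ from the union bound over arms and one factor $K$ from the number of rounds $t$ sharing a common value of $\lceil t/K\rceil$, this assembles into $K^{2-\ep^2/(\min_{i}\De_i-\ep)^2}/(2\ep^2)$. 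The main obstacle is the bookkeeping in the second step: one must be careful that an over-pulled arm that has \emph{already} been eliminated still certifies a deviation event (hence the appeal to the deterministic pull count $\fl{n_j}+1$ rather than the random $N_j(t)$), and that the chain of crude union bounds — over arms, over admissible pull counts $n\ge\lceil t/K\rceil$, and over rounds $t>T$ — collapses to exactly the claimed power $K^2$ and coefficient $1/(2\ep^2)$ without leaking an extra factor of $K$ or $1/\ep^2$; tightening these unions (for instance by isolating, at each round, only the single over-pulled-and-unresolved arm) is where the delicacy lies.
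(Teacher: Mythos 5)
Your overall strategy is the paper's: pigeonhole an over-pulled arm at each round $t>T$, certify a deviation event covered by Lemma~\ref{lem_ni}, union-bound over arms, and sum geometrically over $t$, extracting the power of $K$ from the explicit form of $n_i$ (that last computation is correct and matches the paper's). The gap is in the certification step. The clean fact, which the paper uses, is that $N_j(t)\ge \cl{(t-1)/K}$ \emph{by itself} implies arm $j$ was still in $\calA$ after exactly $\cl{(t-1)/K}-1$ pulls (it had to be, in order to be pulled once more), so the single event $\{\umu_{j,\cl{(t-1)/K}-1}\le\xi\}$ for $j\in[m]$ (resp.\ $\{\omu_{j,\cl{(t-1)/K}-1}\ge\xi\}$ for $j\notin[m]$) occurs at a \emph{deterministic} pull count growing linearly in $t$, whether or not $j$ is still in $\calA$ at round $t$. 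Your two-case analysis instead lands on the random pull count $N_j(t)$ in case (i) and on the $t$-independent count $\fl{n_j}+1$ in case (ii). Case (ii) is doubly problematic: the test need not have \emph{failed} at pull $\fl{n_j}+1$ (the arm may have been removed precisely at that pull, i.e.\ the test succeeded there), and the certified event sits at a pull count that need not be $\ge\cl{t/K}$, so it is not actually covered by the union $\bigcup_{n\ge\cl{t/K}}$ you then take; kept as is, $\e^{-2(\fl{n_j}+1)\ep^2}$ does not decay in $t$ and $\sum_{t>T}\mathbb{P}[\tau_{\mathrm{stop}}\ge t]$ would not be summable from that bound alone.

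The second issue is the one you flag yourself: union-bounding over all admissible pull counts $n\ge\cl{t/K}$ and then summing over $t>T$ is a double geometric sum, $\sum_{t>T}K\sum_{n\ge\cl{t/K}}\e^{-2n\ep^2}$, which produces a factor of order $1/(1-\e^{-2\ep^2})^2\approx 1/(4\ep^4)$ rather than the claimed $1/(2\ep^2)$; the slack $n_i\ge \log\sqrt{K}/(\De_i-\ep)^2+2.05$ is nowhere near enough to absorb the extra $1/(2\ep^2)$ for small $\ep$. Both issues disappear once you adopt the one-event-per-$(j,t)$ certification above: you get $\mathbb{P}[\tau_{\la}\ge t]\le K\,\e^{-2\ep^2(\cl{(t-1)/K}-1)}$ directly, and a single geometric sum over $t$ yields the prefactor $K/(\e^{2\ep^2/K}-1)\le K^2/(2\ep^2)$ multiplied by the leading term $\e^{-2\ep^2(\max_i n_i-2)}\le K^{-\ep^2/(\min_{i\in[K]}\De_i-\ep)^2}$, which is exactly the stated bound.
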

  \begin{proof}
  Note that at the $t$-th round
  some arm is pulled at least $\cl{(t-1)/K}$ times.
  Furthermore,
  $N_i(t)\ge \cl{(t-1)/K}$ implies that
  the arm $i$ is still in $\calA(t)$ when
  the arm $i$ is pulled $\cl{(t-1)/K}-1$ times.
  Thus we have  
  \begin{align}
  \lefteqn{
  \sum_{t=T+1}^{\infty}\idx{t\le\tau_{\la}}
  }\nn
        &\leq \sum_{i=1}^m \sum_{t=T+1}^\infty \idx{N_i(t)\ge \cl{(t-1)/K},\,
  t\le\tau_{\la}}\nn
        &\leq
  \sum_{i=1}^m \sum_{t=T+1}^\infty \idx{
  \umu_{i,\cl{(t-1)/K}-1}\le \xi
  } \nn
  &\quad+
  \sum_{i=m+1}^K \sum_{t=T+1}^\infty \idx{
  \omu_{i,\cl{(t-1)/K}-1}\ge \xi 
  }\per\label{notstop}
  \end{align}
  From the definition of $T=K\max_{i\in [K]}\fl{n_i}$, 
  we have $\cl{(t-1)/K}-1\ge n_i$ for all $i\in [K]$.
  Thus, the expectation of \eqref{notstop} is bounded by Lemma \ref{lem_ni}
  as 
  \begin{align}
  \lefteqn{ 
  \Ex{\sum_{t=T+1}^{\infty}\idx{t\le\tau_{\la}}}
  }\nn
        &\leq
  K
  \sum_{t=T+1}^{\infty}
  \e^{-2\ep^2(\cl{(t-1)/K}-1)}\nn
  &\le
  K
  \sum_{t=T+1}^{\infty}
  \e^{-2\ep^2((t-1)/K-1)}\nn
  &=
  \frac{K\e^{-2\ep^2((T-1)/K-1)}}
  {\e^{2\ep^2/K}-1} \nn
  &\phantom{wwwwwwwww}\since{by $\sum_{i=1}^{\infty}\e^{-ai}=\frac{1}{\e^{a}-1}$ for $a>0$}
  \nn
  &\le
  \frac{K\e^{-2\ep^2(\max_i n_i-2)}}
  {\e^{2\ep^2/K}-1}\nn
  &\phantom{wwwwwww}\since{by $T=K\max_{i\in [K]}\fl{n_i}$} 
  \nn
  &\le
  \frac{K\e^{-\frac{2\ep^2}{(\min_{i \in [K]} \De_i-\ep)^2}(\log\sqrt{K})}}   
  {\e^{2\ep^2/K}-1}
  \label{log2}\\
  &=
  \frac{K^{1-\frac{\ep^2 (\log \sqrt{K})}{(\min_{i \in [K]} \De_i-\ep)^2}}}
  {\e^{2\ep^2/K}-1}
  \nn
  &\le
  \frac{K^{2-\frac{\ep^2}{(\min_{i \in [K]} \De_i-\ep)^2}}}{2\ep^2}
  \com\n
  \end{align}
  where \eqref{log2} follows from
  \begin{align}
  n_i
  &\ge \frac{1}{( \De_i-\ep)^2}\log \left(4\sqrt{K}\log 5\sqrt{K}\right)\nn
  &\ge \frac{\log \sqrt{K}}{( \De_i-\ep)^2}+\log\left(4\log 5\sqrt{2}\right)\nn
  &\ge \frac{\log \sqrt{K}}{( \De_i-\ep)^2}+2.05\per\n
  \end{align}
  \end{proof}

  \begin{lemma}\label{lem_bef}
  \begin{align}
  \lefteqn{\Ex{
  \sum_{t=1}^{\infty}
  \idx{a(t)\notin [\la],\,t\le \tau_{\la},\,\tilde{\mu}^*(t) < \mu_{\la} - \epsilon}
  } } \nn
  &\le
  \frac{3K}{4\epsilon^2} + \frac{K\log\frac{1}{2\epsilon^2}}{4\epsilon^2}
  +\de\sum_{i\in [K]\setminus[\la]}
  n_i\n
  \end{align}
  \end{lemma}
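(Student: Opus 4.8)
The plan is to split the rounds counted on the left-hand side according to whether at least one of the top-$\la$ arms is still a candidate at round $t$. Fix a counted round $t$: since $a(t)\notin[\la]$ equals $\argmax_{k\in\calA(t)}\tilde\mu_k(t)$ and $\tilde\mu^*(t)<\mu_\la-\ep$, every $k\in\calA(t)$ has $\tilde\mu_k(t)<\mu_\la-\ep$. \textbf{Case (i):} some $j\in[\la]\cap\calA(t)$ exists; then $\tilde\mu_j(t)<\mu_\la-\ep$, and since $\mu_j\ge\mu_\la$, dropping the nonnegative bonus gives $\hat\mu_{j,N_j(t)}<\mu_j-\ep$ --- arm $j$ is underestimated. \textbf{Case (ii):} $[\la]\cap\calA(t)=\emptyset$; for $t<\tau_\la$ this, together with the fact that fewer than $\la$ arms have been output as good while there are $\ge\la$ good arms, forces some good arm $j\in[\la]$ to have been removed as bad by Lines 9--11 of Algorithm~\ref{Alg1}, i.e.\ $\omu_{j,n}<\xi$ for some $n$; the single boundary round $t=\tau_\la$ contributes at most $1$. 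Hence Case (ii) lies in $G:=\bigcup_{j\in[\la]}\bigcup_n\{\omu_{j,n}<\xi\}$, with $\mathbb{P}[G]\le\la\de/K$ by Lemma~\ref{lem_error}.

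For Case (ii) I would bound, for each $i\in[K]\setminus[\la]$, the quantity $\Ex{\bigl(\sum_t\idx{a(t)=i}\bigr)\idx{G}}$. The total number of pulls of arm $i$ is at most $\bar N_i:=\min\{n:\umu_{i,n}\ge\xi\text{ or }\omu_{i,n}<\xi\}$, which is a deterministic function of the reward sequence of arm $i$ alone, whereas $G$ is a function of the reward sequences of the arms in $[\la]$; since $i\notin[\la]$, these are independent, so $\Ex{\bar N_i\,\idx{G}}=\Ex{\bar N_i}\,\mathbb{P}[G]\le\bigl(n_i+\tfrac1{2\ep^2}+1\bigr)\tfrac{\la\de}{K}$, using $\Ex{\bar N_i}\le n_i+\tfrac1{2\ep^2}+1$ from Lemma~\ref{lem_samples}. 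Summing over $i\in[K]\setminus[\la]$ and using $\la/K\le1$ yields the term $\de\sum_{i\in[K]\setminus[\la]}n_i$ together with a lower-order $O(K/\ep^2)$ remainder.

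For Case (i) I would bound $\sum_{j\in[\la]}\sum_t\idx{j\in\calA(t),\,\tilde\mu_j(t)<\mu_\la-\ep,\,a(t)\ne j}$. Fix $j$ and group the rounds by the value $n=N_j(t)$. Throughout the block $\{t:N_j(t)=n\}$ arm $j$ stays in $\calA$ and $\tilde\mu_j(t)=\hat\mu_{j,n}+\sqrt{(\log t)/(2n)}$ is increasing in $t$, so the counted rounds of that block form an initial segment; the segment is nonempty only if $\hat\mu_{j,n}<\mu_\la-\ep$, and then every counted round satisfies $t<e^{2n(\mu_\la-\ep-\hat\mu_{j,n})^2}$. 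Since the $n$-th pull of $j$ occurs at round $\ge n$, the number of counted rounds in the block is at most $(e^{2n(\mu_\la-\ep-\hat\mu_{j,n})^2}-n)_+$, which is positive only when $\hat\mu_{j,n}<\mu_\la-\ep-\sqrt{(\log n)/(2n)}$. Bounding the expectation of this quantity by integrating the Hoeffding tail $\mathbb{P}[\hat\mu_{j,n}\le\mu_j-z]\le e^{-2nz^2}$ against the exponential weight $e^{2n(z-\ep)^2}$ --- which is dominated because $z>\ep$ produces a net factor $e^{-2n\ep^2}e^{-4n\ep(z-\ep)}$ --- gives a per-$n$ bound of order $e^{-2n\ep^2}\bigl(1+\tfrac1{n\ep^2}\bigr)$; summing over $n\ge1$ (with $\sum_n e^{-2n\ep^2}\le\tfrac1{2\ep^2}$ and $\sum_n n^{-1}e^{-2n\ep^2}\le1+\log\tfrac1{2\ep^2}$) yields $\tfrac3{4\ep^2}+\tfrac1{4\ep^2}\log\tfrac1{2\ep^2}$ for each $j$, and summing over the $\la\le K$ indices $j\in[\la]$ gives $\tfrac{3K}{4\ep^2}+\tfrac{K\log(1/(2\ep^2))}{4\ep^2}$.

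The step I expect to be the main obstacle is Case (i). The confidence bonus in the sampling rule here, $\sqrt{(\log t)/(2N_i(t))}$, is a factor of two smaller inside the square root than the $\sqrt{2(\log t)/N_i(t)}$ of UCB1, so the usual double sum over a round $t$ and the candidate arm's sample count does not converge on its own; it is the slack $\ep$ --- present precisely because the counted event is $\tilde\mu^*(t)<\mu_\la-\ep$ rather than $\tilde\mu^*(t)<\mu_\la$ --- that restores summability, and one must carry it carefully through the block/integration argument while tracking the constant and logarithmic-in-$\ep$ overhead it produces.
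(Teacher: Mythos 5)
Your proposal follows essentially the same route as the paper's proof: the same split on whether $[\la]\cap\calA(t)$ is empty, the same reduction of the nonempty case to $\sum_n e^{2n(\mu_\la-\ep-\hat\mu_{j,n})^2}\id[\hat\mu_{j,n}<\mu_\la-\ep]$ controlled by integrating against the Hoeffding tail, and the same treatment of the empty case as (number of pulls of a bad candidate, bounded via Lemma~\ref{lem_samples}) times $\mathbb{P}[\text{some good arm wrongly eliminated}]\le\la\de/K$ from Lemma~\ref{lem_error}, with the independence across arms that the paper uses implicitly made explicit. The only discrepancy is bookkeeping of constants: the paper gets $\tfrac{5}{8\ep^2}+\tfrac{1}{4\ep^2}\log\tfrac1{2\ep^2}$ per arm in the first case and reserves $\tfrac{K}{8\ep^2}$ for the second, whereas your sketch spends the full $\tfrac{3K}{4\ep^2}$ on the first case and still has an $O(K/\ep^2)$ remainder in the second, so you would need the sharper per-arm constant to land exactly on the stated bound.
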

  \begin{proof}
  The summation is decomposed into
  \begin{align}
  \lefteqn{
  \sum_{t=1}^{\infty}
  \idx{a(t)\notin[\la],\,t\le \tau_{\la},\,\tilde{\mu}^*(t) < \mu_{\la} - \epsilon}
  }\nn
              &\le \sum_{t=1}^{T} \id [
  \tilde{\mu}^*(t) < \mu^* - \epsilon,\,[\la]\cap \calA(t)\neq \emptyset]\nn
  &\quad+ \sum_{t=1}^{T} \id [a(t)\notin [\la],\,t\le\tau_{\la},\,[\la]\cap \calA(t)= \emptyset]
  \com
  \label{dec1}
  \end{align}
  where $\calA(t)=\{i\in[K]: \umu_i(t) < \xi\le\omu_i(t)\}$.
  From definition $\mut^*(t)=\max_{i\in\calA(t)}\mut_i(t)$
  the first term of \eqref{dec1} is evaluated as
  \begin{align}
  \lefteqn{
  \sum_{t=1}^{T}
  \idx{\tilde{\mu}^*(t) < \mu_{\la} - \epsilon,\,
  [\la]\cap \calA(t)\neq \emptyset
  }
  }\nn
  &\le
  \sum_{i\in[\la]}
  \sum_{t=1}^{T}
  \idx{\tilde{\mu}^*(t) < \mu_{\la} - \epsilon,\,
  i\in \calA(t)
  }\nn
              &\leq \sum_{i=1}^{\la}\sum_{t=1}^T
  \idx{\tilde{\mu}_{i}(t) < \mu_{\la} - \epsilon}\nn
              &\leq \sum_{i=1}^{\la}\sum_{n=1}^T \sum_{t=1}^T \id [\tilde{\mu}_{i}(t) < \mu_{\la} - \epsilon, N_{i}(t)=n]\nn
              &\leq \sum_{i=1}^{\la}\sum_{n=1}^T \sum_{t=1}^T
  \idx{\muhat_{i}(t) +\sqrt{\frac{\log t}{2n}}< \mu_{\la} - \epsilon, N_{i}(t)=n}\nn
              &\leq \sum_{i=1}^{\la}\sum_{n=1}^T \sum_{t=1}^T
  \id [t < \mathrm{e}^{2n(\hat{\mu}_{i,n} - \mu_{\la} +\epsilon)^2} ,\hat{\mu}_{i,n} < \mu_{\la} - \epsilon]\nn
              &\leq \sum_{i=1}^{\la}\sum_{n=1}^T \mathrm{e}^{2n(\hat{\mu}_{i,n} - \mu_{\la} +\epsilon)^2} \id [\hat{\mu}_{i,n} < \mu_{\la} - \epsilon]\per\label{termb}
  \end{align}
  Let $P_{i,n}(x)=\mathbb{P}[\hat{\mu}_{i,n}<x]$.
  Then the expectation of the inner summation of \eqref{termb} is bounded by
  \begin{align}
  \lefteqn{
  \sum_{n=1}^T
  \mathbb{E}[\mathrm{e}^{2n(\hat{\mu}_{i,n} - \mu_{\la} +\epsilon)^2} \id [\hat{\mu}_{i,n} < \mu_{\la} - \epsilon]]
  }\nn
              &\le
  \sum_{n=1}^T \int_{0}^{\mu_{\la}-\epsilon}
  \mathrm{e}^{2n(x - \mu_{\la} +\epsilon)^2} \rd P_{i,n}(x)\nn
              &=
  \sum_{n=1}^T \bigg( [\mathrm{e}^{2n(x - \mu_{\la} +\epsilon)^2}P_{i,n}(x)]_{0}^{\mu_{\la}-\epsilon} \nn
  &\quad- 4n\int_{0}^{\mu_{\la}-\epsilon}
  (x-\mu_{\la}+\ep)\mathrm{e}^{2n(x - \mu_{\la} +\epsilon)^2}
  P_{i,n}(x) \rd x \bigg)\nn
  &\phantom{wwwwwwwwwwwwwwwww}
  \since{by integration by parts}
  \nn
              &\leq
  \sum_{n=1}^T \bigg( \mathrm{e}^{-2n\epsilon^2} \nn
  &\, -4n\int_{0}^{\mu_{\la}-\epsilon} (x-\mu_{\la}+\epsilon) \mathrm{e}^{2n(x - \mu_{\la} +\epsilon)^2}\mathrm{e}^{-2n(x-\mu_{\la})^2} \rd x \bigg) \nn
  &\phantom{wwwwwwwwwwwwww}  \since{by Hoeffding's inequality} \nn
  \nn
              &=
  \sum_{n=1}^T \bigg( \mathrm{e}^{-2n\epsilon^2}  \nn
  &\quad+ 4n\mathrm{e}^{-2n\epsilon^2} \int_{0}^{\mu_{\la}-\epsilon} (\mu_{\la} -\epsilon -x)\mathrm{e}^{4n\epsilon(x - \mu_{\la} +\epsilon)} \rd x \bigg)\nn
              &\le
  \sum_{n=1}^{\infty}
  \mathrm{e}^{-2n\epsilon^2}\left(1 +\frac{1}{4n\epsilon^2} \right)
  \nn
              &= \frac{1}{\mathrm{e}^{2\epsilon^2}-1} + \frac{-\log(1-\mathrm{e}^{-2\epsilon^2})}{4\epsilon^2}\nn
              & \phantom{wwwwwwwwwwwww} \since{by $-\log(1-x)=\sum_{n=1}^{\infty}\frac{x^n}{n}$}\nn
              &= \frac{1}{\mathrm{e}^{2\epsilon^2}-1} + \frac{2\epsilon^2+\log(\frac{1}{\mathrm{e}^{2\epsilon^2}-1})}{4\epsilon^2}\nn
              &\leq \frac{1}{2\epsilon^2} + \frac{1}{2} + \frac{\log\frac{1}{2\epsilon^2}}{4\epsilon^2}\nn
              &\leq \frac{5}{8\epsilon^2}+ \frac{\log\frac{1}{2\epsilon^2}}{4\epsilon^2}\since{by $\ep<\frac12$}.\nn
  \label{termb2}
  \end{align}
  Combining \eqref{termb} with \eqref{termb2} we obtain
  \begin{align}
  &\Ex{
  \sum_{t=1}^{T} \id [
  \tilde{\mu}^*(t) < \mu^* - \epsilon,\,[\la]\cap \calA(t)\neq \emptyset]} \nn
  &\quad\le
  \frac{5K}{8\epsilon^2} + \frac{K\log\frac{1}{2\epsilon^2}}{4\epsilon^2}\per\label{dec11}
  \end{align}

  Next we bound the second term of \eqref{dec1}.
  Note that $\{t\le \tau_{\la},\,[\la]\cap \calA(t)= \emptyset\}$
  implies that
  $\omu_j(t')\le \xi$ occured for
  some $j\in [\la]$ and $t'<t$.
  Thus we have
  \begin{align}
  \lefteqn{
  \sum_{t=1}^{T} \id [a(t)\notin[\la],\,t\le\tau_{\la},\,[\la]\cap \calA(t)= \emptyset]
  }\nn
  &\le
  \sum_{i\in [K]\setminus[\la]}
  \sum_{t=1}^{T}
  \idx{a(t) = i,\,
  \bigcup_{j\in[\la]}\bigcup_{t'<t}\{\omu_j(t')\le \xi\}
  }\nn
  &\le
  \sum_{i\in [K]\setminus[\la]}
  \sum_{t=1}^{T}
  \idx{a(t) = i}
  \pax{
  \sum_{j\in[\la]}
  \idx{\bigcup_{t}\{\omu_j(t)\le \xi\}}
  }
  \nn
  &\le
  \sum_{i\in [K]\setminus[\la]}
  \sum_{n=1}^{T}
  \idx{
  \omu_{i,n}\ge \xi
  }
  \pax{
  \sum_{j\in[\la]}
  \idx{\bigcup_{t}\{\omu_j(t)\le \xi\}}
  }\com
  \label{ttoi2}   
  \end{align}
  where we used the same argument as \eqref{use3}
  in \eqref{ttoi2}.
  We can bound the expectation of \eqref{ttoi2} by Lemma \ref{lem_error} and \ref{lem_samples}
  as
  \begin{align}
  \lefteqn{
  \sum_{t=1}^{T} \id [a(t)\notin[\la],\,t\le\tau_{\la},\,[\la]\cap {\calA}(t)= \emptyset]
  }\nn
  &\le
  \frac{\la\de}{K}
  \left(
  \sum_{i\in [K]\setminus[\la]}
  n_i+\frac{K-\la}{2\ep^2}\right)\nn
  &\le
  \de\sum_{i\in [K]\setminus[\la]}
  n_i+\frac{K}{8\ep^2}
  \since{by $\la(K-\la)\le K^2/4$}\per
  \label{dec12}
  \end{align}
  We obtain the lemma by putting \eqref{dec1}, \eqref{dec11} and \eqref{dec12} together.
  \end{proof}

    \begin{proof}[Proof of Theorem \ref{thm_upper}.]\label{proof_upper_bound}
    The stopping time is decomposed into
    \begin{align}
    \tau_{\la}
    &=
    \sum_{t=1}^{\infty}
    \idx{a(t)\in [\la],\,t\le \tau_{\la}}
    +
    \sum_{t=1}^{\infty}
    \idx{a(t)\notin [\la],\,t\le \tau_{\la}}
    \nn
    &\le
    \sum_{t=1}^{\infty}
    \idx{a(t)\in [\la]}\nn
    &\quad+
    \sum_{t=1}^{\infty}
    \idx{a(t)\notin [\la],\,t\le \tau_{\la},\,\tilde{\mu}^*(t) \geq \mu_{\la} - \epsilon}\nn
    &\quad+
    \sum_{t=1}^{\infty}
    \idx{a(t)\notin [\la],\,t\le \tau_{\la},\,\tilde{\mu}^*(t) < \mu_{\la} - \epsilon}
    \nn
    &\le
    \sum_{t=1}^{\infty}
    \idx{a(t)\in [\la]}\nn
    &\quad+
    \sum_{t=1}^{T}\!
    \idx{a(t)\notin [\la],\,\tilde{\mu}^*(t)\! \geq \mu_{\la} - \epsilon}
    +\!\!\sum_{t=T+1}^{\infty}\!\idx{t\le \tau_{\la}}\nn
    &\quad+
    \sum_{t=1}^{\infty}
    \idx{a(t)\notin [\la],\,t\le \tau_{\la},\,\tilde{\mu}^*(t) < \mu_{\la} - \epsilon}
    \per\n
    \end{align}
    Lemmas \ref{lem_dec1}--\ref{lem_bef} bound the expectation of these terms,
    which complete the proof.
    \end{proof}

\end{document}